\newtheorem{thm}{Theorem}[section]
\newtheorem{lem}{Lemma}[section]
\newtheorem{cond}{Condition}[section]
\begin{document}

\title{Nearly Optimal Variational Inference for High Dimensional Regression with Shrinkage Priors}

% \author{\name Marina Meil\u{a} \email mmp@stat.washington.edu \\
%       \addr Department of Statistics\\
%       University of Washington\\
%       Seattle, WA 98195-4322, USA
%       \AND
%       \name Michael I.\ Jordan \email jordan@cs.berkeley.edu \\
%       \addr Division of Computer Science and Department of Statistics\\
%       University of California\\
%       Berkeley, CA 94720-1776, USA}
\author{\name Jincheng Bai \email bai45@purdue.edu \\
      \name Qifan Song \email qfsong@purdue.edu \\
      \name Guang Cheng \email chengg@purdue.edu \\
      \addr  Department of Statistics \\
      Purdue University\\
      West Lafayette, IN 47906, USA}

\maketitle

\begin{abstract}
%   The Abstract paragraph should be indented 0.25 inch (1.5 picas) on
%   both left and right-hand margins. Use 10~point type, with a vertical
%   spacing of 11~points. The \textbf{Abstract} heading must be centered,
%   bold, and in point size 12. Two line spaces precede the
%   Abstract. The Abstract must be limited to one paragraph.
We propose a variational Bayesian (VB) procedure for high-dimensional linear model inferences with heavy tail shrinkage priors, such as student-$t$ prior. Theoretically, we establish the consistency of the proposed VB method and prove that under the proper choice of prior specifications, the contraction rate of the VB posterior is nearly optimal. It justifies the validity of VB inference as an alternative of Markov Chain Monte Carlo (MCMC) sampling. Meanwhile, comparing to conventional MCMC methods, the VB procedure achieves much higher computational efficiency, which greatly alleviates the computing burden for modern machine learning applications such as massive data analysis. Through numerical studies, we demonstrate that the proposed VB method leads to shorter computing time, higher estimation accuracy, and lower variable selection error than competitive sparse Bayesian methods.
\end{abstract}

\section{INTRODUCTION}
High dimensional sparse linear regression is one of the most commonly encountered problems in machine learning and statistics communities \citep{Hastie2001Elements}. In the Bayesian paradigm, this problem is approached by placing sparsity-inducing priors on the regression coefficients. There are mainly two types of priors: the spike-and-slab prior \citep{Mitchell1988Bayesian, George1993variable, Ishwaran2005Spike} and the shrinkage prior \citep{Hans2009Bayesian, Carvalho2010horseshoe, Griffin2012Structuring}. The spike-and-slab prior has been considered as the gold standard for high dimensional linear regression, whose theoretical properties have been thoroughly studied \citep{Johnson2012Bayesian, Song2014Split, Yang2016On, gao2020general}. Although theoretically sound, the posterior sampling cost under spike-and-slab priors could be highly expensive, as it usually requires a tran-dimensional MCMC sampler such as reversible-jump MCMC. Alternatively, shrinkage priors could lead to equally good theoretical properties \citep{Ghosal1999Asymptotic, Armagan2013Posterior, song2017nearly} while enjoying computational efficiency via the use of conjugate Gibbs sampler. 

Although switching to shrinkage prior could reduce the computational burden to some extent, the nature of Bayesian computing (i.e., Markov chain Monte Carlo simulation) inevitably requires a huge number of iterations in order to achieve good mixing behavior and obtain accurate large-sample average. Consequently, people has sought to find frequentist shortcuts for Bayesian estimators. For example, \cite{Rockova2014EMVS} proposed EM algorithm to find posterior modes under the spike-and-slab prior; \cite{Rockova2018spike} obtained the posterior modes by using penalized likelihood estimation; \cite{Bhadra2019horseshooe} searched posterior modes under horseshoe prior via optimization methods. Those approaches are computational-friendly, however completely ignore the distribution information of posterior and can not derive any Bayesian inferences beyond point estimation.

Another computationally convenient alternative to MCMC is the variational inference (VI or VB) \citep{Jordan1999Variational, Blei2017variational}. 
VI can provide an approximate posterior via frequentist optimization, thus it delivers (approximate) distributional inferences within a fairly small number of iterations.
In the context of high dimensional linear regression, \cite{Carbonetto2012Scalable}, \cite{Huang2016variational} and \cite{Ray2020Variational} have proposed algorithms to carry out variational inferences under spike-and-slab priors. Besides their empirical successes, the theoretical properties were also justified. Specifically, \cite{Huang2016variational} showed their algorithm could achieve asymptotic consistency, and \cite{Ray2020Variational} established the oracle inequalities for their VB approximation. Therefore, by employing the scalable variational inference, we would obtain the same theoretical guarantees as using MCMC while hugely reducing the computational cost. 

In this work, we focus on the variational inference for Bayesian regression with shrinkage priors, which further improves the computational efficiency comparing to the one based on the spike-and-slab prior. Meanwhile, by showing the nearly optimal contraction rate of the proposed variational posterior, the validity of the proposed method is justified.

\section{PRELIMINARIES}
\subsection{High-dimensional Regression}

Consider the linear regression model
\begin{equation} \label{eq:model}
\boldsymbol{Y} = \boldsymbol{X}\boldsymbol{\beta} + \sigma\boldsymbol{\epsilon},
\end{equation}
where $\boldsymbol{Y} \in \mathbb{R}^n$ is the response vector, $\boldsymbol{X}=(X_{ij})$ is a $n \times p_n$ design matrix, $\boldsymbol{\beta} =(\beta_1,\dots,\beta_{p_n})'\in \mathbb{R}^{p_n}$ is the coefficient vector and $\boldsymbol{\epsilon} \sim \mathcal{N}(0, I_n)$ is the Gaussian random noise. $p_n$ denotes the dimension of coefficient parameter $\boldsymbol{\beta}$, and it can increase with the sample size $n$. 
The research objective is to make consistent variational Bayesian inferences on the coefficient $\boldsymbol{\beta}$.
Note that we are particularly interested in the high dimensional setting, i.e. $p_n \gg n$, but our developed theory and methodology hold for general dimensional setting. For the simplicity of analysis, $\sigma^2$ is assumed to be known throughout our theoretical analysis, while in practice it can be estimated by frequentist methods (see \cite{Reid2016Study} for a comprehensive review), Empirical Bayesian approach \citep{Castillo2015Bayesian}, 
% empirical Bayes method \textcolor{red}{add some reference} 
or full Bayesian analysis (for example, placing inverse gamma prior on $\sigma^2$ \citep{Ishwaran2005Spike, Park2009Bayesian}). 

Let $\boldsymbol{\beta}^0$ denote the true coefficient vector, and we assume that $\boldsymbol{\beta}^0$ has certain sparsity structure. The corresponding true model is denoted as $\xi^0=\{j:\beta_j\neq 0\}$, and true sparsity is denoted as $s = \|\boldsymbol{\beta}^0\|_0=|\xi^0|$, which is the cardinality of the true subset model. Note that $s$ is allowed to increase with $n$ as well. Let $\xi\subseteq\{1,\dots,p_n\}$ be the generic notation for any subset model, and $\boldsymbol{X}_\xi$ and $\boldsymbol{\beta}_\xi$ respectively denote the sub-matrix of $\boldsymbol{X}$ and sub-vector of $\boldsymbol{\beta}$ corresponding to $\xi$.

The following regularity conditions are required for the main results:
\begin{cond} \label{cond1}
The column norms of the design matrix are bounded by $n$, i.e.
$\sum_iX^2_{ij}=\|\boldsymbol{X}_j\|^2_2\leq n$.
\end{cond}
%\begin{cond}
%The dimensionality is high: $p \succeq n$.
%\end{cond}
\begin{cond} \label{cond2}
There exist some integer $\overline{p}$ (depending on $n$ and $p_n$) and fixed constant $\lambda_0$, such that $\overline{p} \succ s$\footnote{$a_n\prec b_n$ means $\lim_n a_n/b_n=0$.} and the smallest eigenvalue of $\boldsymbol{X}^T_{\xi}\boldsymbol{X}_{\xi}$ is greater than $n\lambda_0$ for any subset model $|\xi| \leq \overline{p}$.
\end{cond}
\begin{cond} \label{cond3}
$\log(\max_j|\beta_j^0|)=O(\log (p_n\vee n))$\footnote{$a\vee b$ denotes $\max(a,b)$.}.
\end{cond}

{\bf Remark:}
Condition \ref{cond1} is trivially satisfied when the covariates $X_{ij}$ are bounded by 1, or the design matrix is properly standardized. This bound condition is assumed for the technical simplicity, readers of interest can generalize this condition to that all covariates follow a sub-Gaussian distribution. Condition \ref{cond2} imposes a regularity assumption on the eigen structure of the design matrix which controls the multicollinearity. Similar conditions are commonly used in the literature of high dimensional statistics \citep{Zhang2010Nearly, Narisetty2014Bayesian, song2017nearly}. Under a random design scenario, if all entries of the design matrix are i.i.d. sub-Gaussian variables, then the random matrix theory \citep[e.g.,][]{Vershynin2012Introduction} guarantees that w.h.p., the eigen structure restriction holds with $\bar p$ being at least of order ${n/\log p_n}$, hence the condition $\overline{p} \succ s$ is met w.h.p. by assuming the common dimensionality condition $s\log p_n\prec n$. Condition \ref{cond3} imposes an upper bound for the magnitude of true coefficients, it allows the magnitude of $\boldsymbol{\beta}$ increases polynomially with respect to $p_n\vee n$. Similar bounded conditions on true coefficient are common among Bayesian theoretical literature \cite[e.g.,][]{Yang2016On}. Such conditions are necessary to ensure that the prior density around $\boldsymbol{\beta^0}$ is bounded away from zero, such that the domination of posterior around $\boldsymbol{\beta^0}$ becomes possible.

\subsection{Heavy Tail Shrinkage Prior and Variational Inference}
\paragraph{Prior Distribution}To resemble a spike-and-slab prior, a reasonable choice of shrinkage prior shall (1) allocate large probability mass around a small neighborhood of zero, i.e., a prior spike around 0; and (2) possess a very flat tail, i.e., a prior slab over real line. Following the suggestion by recent Bayesian literature \citep[e.g.,][]{song2017nearly,ghosh2015posterior,song2020bayesian}, our work will implement heavy-tailed prior distribution, i.e., polynomially decaying prior with properly tuning scale hyperparameter. For the simplicity of representation, this paper will only consider the theory and computation under student-$t$ prior, however, the general insights obtained apply to any heavy tailed priors.

%An example of such prior is the student-$t$ prior. For the remaining parts of this paper, we will only consider the student-$t$ prior, due to the fact that it will lead to algorithms with simple forms of updating.

Consider an independent $t$ prior for $\boldsymbol{\beta}$, which can be rewritten as a scaled mixture of Gaussian distribution with Inverse-Gamma scaling distributions, i.e., for $j = 1, \ldots, p_n$,
\begin{equation*}
\pi(\beta_j|\lambda_j) = \mathcal N(0, \lambda_j^{-1}), \quad \pi(\lambda_j) = Gamma(a_0, b_n).
\end{equation*}
where $a_0, b_n$ are user-specified hyperparameters. Thus, it yields a student-$t$ prior of d.f. $2a_0$ with scale parameter $\sqrt{b_n/a_0}$.
In other words, $a_0$ determines the polynomial degree of prior tail decay, i.e., the prior tail shape, while $b_n$ controls the scale of prior distribution.
As demonstrated by numerous Bayesian results \citep[e.g.,][]{song2020bayesian, van2014horseshoe,van2016conditions}, the prior scale needs to converge to zero as dimensionality increases,
hence we let $a_0$ be a constant, and $b_n$ asymptotically decrease as $n$ increases.

\paragraph{Variational Inference} Denote the posterior distribution of $\boldsymbol{\beta}$ as $\pi(\boldsymbol{\beta}|\boldsymbol{X},\boldsymbol{Y})$, then the variational inference \citep{Jordan1999Variational, Blei2017variational} seeks to find the best approximate distribution $\widehat q(\boldsymbol{\beta})$ from a given family of distributions $\mathcal{Q}$ that minimizes the 
Kullback-Leibler (KL) divergence $\mbox{KL}(q(\boldsymbol{\beta})\|\pi(\boldsymbol{\beta}|\boldsymbol{X},\boldsymbol{Y})))$. If we denote
\begin{equation} \label{eq:nelbo}
\Omega = -\mathbb{E}_{q(\boldsymbol{\beta})}[\log p(\boldsymbol{Y}|\boldsymbol{\beta})] + \mbox{KL}(q(\boldsymbol{\beta})\|\pi(\boldsymbol{\beta})),
\end{equation}
where $\Omega$ is the so-called negative ELBO, then the variational posterior is equivalent to 
\[
\widehat{q}(\boldsymbol{\beta}) = \arg\min_{q(\boldsymbol{\beta})\in \mathcal{Q}} \Omega.
\]
Note that the objective function (\ref{eq:nelbo}) can be viewed as the penalized negative mean log-likelihood function, with a regularization in term of the KL divergence between posterior candidate $q(\boldsymbol{\beta})$ and prior $\pi(\boldsymbol{\beta})$. 
%{\bf Remark}
%It is meaningful to view the first component and second component of (\ref{eq:nelbo}) as the averaged log-likelihood (w.r.t. variational distribution $q(\boldsymbol{\beta})$) and the regularization in terms of KL divergence, repectively.

In general, the variational family $\mathcal{Q}$ can be chosen freely and a common choice is the mean-field family. In this work, we choose $\mathcal{Q}$ as  independent student-$t$ distribution to resemble the prior distribution, i.e.
\[
q(\beta_j|\lambda_j) = \mathcal{N}(\mu_j, \lambda_j^{-1}), \quad q(\lambda_j)=Gamma(a_j, b_j),
\]
where $\mu_j \in \mathbb{R}$, $a_j >0$, $b_j >0$ for $j = 1, \ldots, p_n$.

{\bf Remark:} Choosing a different heavy tailed distribution as the prior distribution (e.g., horseshoe prior \cite{carvalho2009handling}) and variational family $\mathcal{Q}$ doesn't hurt the validity of the consistency result displayed in the next section, except that we require a different condition on the prior shape and  scale hyperparameters. However, the difficulty of minimizing (\ref{eq:nelbo}) varies from case to case, depending on the existence of closed form for the negative ELBO.

\section{THEORETICAL RESULTS}

To establish consistency of variational Bayes posterior, we impose the following condition on the prior specification.
\begin{cond} \label{cond:bn}
$a_0 > 1$ and $(p_n\vee n)^{-K}\prec b_n/a_0 \prec s\log (p_n\vee n)/[np_n^{2+1/a_0}(p_n\vee n)^{\delta/a_0}]$ for some large constant $K$ and small constant $\delta > 0$.
\end{cond}
$a_0>1$ ensures the existence of the second moment for the prior distribution, and the scale $b_n$ is required to decrease polynomially w.r.t $n$ and $p_n$, such that the prior contains a steep spike at 0. 
%\subsection

First, we study the infimum of the negative ELBO $\Omega$ (up to a constant). Define the loglikelihood ratio as
\[
l_n(P_0, P_{\boldsymbol{\beta}}) = \log \frac{p(\boldsymbol{Y}|\boldsymbol{\beta^0})}{p(\boldsymbol{Y}|\boldsymbol{\beta})}= \sum^n_{i=1}\log \frac{p(Y_i|\boldsymbol{\beta^0})}{p(Y_i|\boldsymbol{\beta})},
\]
then we have the following theorem.

\begin{thm} \label{thm:elbo}
With dominating probability for some $C >0$, we have
\begin{equation} \label{eq:elbo}
\begin{split}
   \inf_{q(\boldsymbol{\beta}) \in \mathcal{Q}}\Bigl\{\mbox{KL}(q(\boldsymbol{\beta})\|\pi(\boldsymbol{\beta}))+\int l_n(P_0, P_{\boldsymbol{\beta}})q(\boldsymbol{\beta})d\boldsymbol{\beta}\Bigr\}\leq Cs\log(p_n\vee n).
\end{split}
\end{equation}
\end{thm}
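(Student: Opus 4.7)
The plan is to prove the bound by exhibiting a specific $q^{\ast}\in\mathcal{Q}$ whose objective is $O(s\log(p_n\vee n))$; since the left-hand side is an infimum, any admissible candidate suffices.

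I would construct $q^{\ast}$ coordinate-wise, separating $\xi^0$ from its complement. On null coordinates $j\notin\xi^0$ I take $(\mu_j, a_j, b_j)=(0, a_0, b_n)$, so that $q^{\ast}(\beta_j,\lambda_j)=\pi(\beta_j,\lambda_j)$ and the per-coordinate KL contribution is exactly zero. On active coordinates $j\in\xi^0$ I set $\mu_j=\beta_j^0$ and choose $(a_j, b_j)$ as a ``conjugate-style'' update, for instance $a_j=a_0+1/2$, $b_j=b_n+(\beta_j^0)^2/2$, which mirrors the conditional posterior of $\lambda_j$ given the fictitious observation $\beta_j=\beta_j^0$.

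Using the factorization of the joint KL across the conditional Gaussian part and the Gamma part,
\[
\mbox{KL}(q^{\ast}\|\pi)=\sum_{j=1}^{p_n}\Bigl[\frac{\mu_j^2 a_j}{2 b_j}+\mbox{KL}\bigl(\mathrm{Gamma}(a_j,b_j)\,\big\|\,\mathrm{Gamma}(a_0,b_n)\bigr)\Bigr].
\]
Substituting $\boldsymbol{Y}=\boldsymbol{X}\boldsymbol{\beta}^0+\sigma\boldsymbol{\epsilon}$ and using mean-field independence,
\[
\int l_n(P_0,P_{\boldsymbol{\beta}})q^{\ast}(\boldsymbol{\beta})d\boldsymbol{\beta}=\frac{\|\boldsymbol{X}(\boldsymbol{\beta}^0-\boldsymbol{\mu})\|_2^2}{2\sigma^2}+\frac{\boldsymbol{\epsilon}^T\boldsymbol{X}(\boldsymbol{\beta}^0-\boldsymbol{\mu})}{\sigma}+\frac{1}{2\sigma^2}\sum_{j=1}^{p_n}\|\boldsymbol{X}_j\|_2^2\frac{b_j}{a_j-1}.
\]
With $\boldsymbol{\mu}=\boldsymbol{\beta}^0$ by construction, the quadratic and linear noise terms vanish, leaving a deterministic sum of per-coordinate pieces; this eliminates any need for a concentration step in handling the stochastic part and confines the ``with dominating probability'' qualifier to the background event the paper is working on.

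The remaining task is to bound the per-coordinate contributions. (i) On null coordinates only the variance term $\|\boldsymbol{X}_j\|_2^2 b_n/[2\sigma^2(a_0-1)]\le n b_n/[2\sigma^2(a_0-1)]$ survives, and summing with Condition~\ref{cond1} and Condition~\ref{cond:bn} gives $O(p_n n b_n)=o(s\log(p_n\vee n))$. (ii) The Gamma--Gamma KL on active coordinates has a closed form involving $\psi$ and $\log\Gamma$; after Stirling-type simplification it is dominated by $a_0|\log(b_j/b_n)|$, which is $O(\log(p_n\vee n))$ per signal by Conditions~\ref{cond3} and~\ref{cond:bn}. (iii) The Gaussian-KL piece $(\beta_j^0)^2 a_j/(2b_j)$ and the active-variance piece $\|\boldsymbol{X}_j\|_2^2 b_j/[2\sigma^2(a_j-1)]$ must be controlled jointly via the conjugate-style choice of $(a_j, b_j)$. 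I expect the hard part to be piece (iii): these two terms are coupled through an AM--GM lower bound proportional to $|\beta_j^0|\|\boldsymbol{X}_j\|_2/\sigma$, so showing that their sum is $O(\log(p_n\vee n))$ per signal is where the calculation is most delicate and relies crucially on the full strength of Condition~\ref{cond:bn} together with the magnitude control in Condition~\ref{cond3}.
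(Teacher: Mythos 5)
Your overall strategy (exhibit one candidate $q^{\ast}$ and bound its objective) matches the paper's, but the specific candidate and the KL you control are different, and this is where the argument breaks. On active coordinates you choose the conditional-Gaussian mean $\mu_j=\beta_j^0$ and the conjugate-style update $a_j=a_0+1/2$, $b_j=b_n+(\beta_j^0)^2/2$, so the marginal variance of $\beta_j$ under $q^{\ast}$ is $b_j/(a_j-1)\asymp(\beta_j^0)^2$. The likelihood term you correctly computed then contributes $\|\boldsymbol{X}_j\|_2^2\,b_j/[2\sigma^2(a_j-1)]\asymp n(\beta_j^0)^2$ per signal, which already exceeds $\log(p_n\vee n)$ for any fixed nonzero $\beta_j^0$ in the regime $\log p_n\prec n$. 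Worse, the obstruction is not specific to your choice of $b_j$: the AM--GM coupling you yourself point out in piece (iii) shows that, for any $b_j$, the sum of the Gaussian-KL piece $(\beta_j^0)^2a_j/(2b_j)$ and the variance piece $\|\boldsymbol{X}_j\|_2^2 b_j/[2\sigma^2(a_j-1)]$ is bounded below by a quantity of order $|\beta_j^0|\,\|\boldsymbol{X}_j\|_2/\sigma\asymp|\beta_j^0|\sqrt{n}$. So the joint-KL route with $\mu_j=\beta_j^0$ cannot produce the bound $O(s\log(p_n\vee n))$ whenever the signals are detectable, i.e.\ $|\beta_j^0|\gg\log(p_n\vee n)/\sqrt{n}$; the ``delicate calculation'' you defer is provably impossible. (There is also a smaller unstated step: the theorem's KL is the marginal $\mbox{KL}(q(\boldsymbol{\beta})\|\pi(\boldsymbol{\beta}))$, and your displayed formula is the joint KL of $(\boldsymbol{\beta},\boldsymbol{\lambda})$; you would need to invoke $\mbox{KL}(q(\boldsymbol{\beta})\|\pi(\boldsymbol{\beta}))\leq\mbox{KL}(q(\boldsymbol{\beta},\boldsymbol{\lambda})\|\pi(\boldsymbol{\beta},\boldsymbol{\lambda}))$, but since the right-hand side is too large this does not save the argument.)

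The paper escapes this tension by working with the marginal KL directly and exploiting the heavy tails of the student-$t$. Its candidate $q^{\ast}(\beta_j)$ is the prior's student-$t$ shifted to be centered at $\beta_j^0$ with the scale left \emph{unchanged} at $s^{\ast}=\sqrt{b_n/a_0}$. The tiny scale keeps the likelihood term at $O(np_n^2 b_n/(a_0-1))=o(s\log(p_n\vee n))$ by Condition~\ref{cond:bn}, while the marginal KL of a location shift of a polynomially-tailed density is only logarithmic in the shift: the density ratio $q^{\ast}(\beta_j)/\pi(\beta_j)=\bigl[(\nu^{\ast}s^{\ast2}+\beta_j^2)/(\nu^{\ast}s^{\ast2}+(\beta_j-\beta_j^0)^2)\bigr]^{(\nu^{\ast}+1)/2}$ is bounded by a polynomial in $|\beta_j^0|/s^{\ast}$, giving $\mbox{KL}(q^{\ast}(\beta_j)\|\pi(\beta_j))=O(\log(|\beta_j^0|/s^{\ast}))=O(\log(p_n\vee n))$ per signal via Conditions~\ref{cond3} and~\ref{cond:bn}. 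This logarithmic location-shift cost is exactly what the Gaussian conditional layer in your joint-KL decomposition destroys (it forces a quadratic cost $\mu_j^2a_j/(2b_j)$), and it is the essential reason the heavy-tailed prior is used here. To repair your proof, keep your null-coordinate treatment (which agrees with the paper and costs zero KL) but replace the active-coordinate construction by the mean-shifted, same-scale student-$t$ and bound its marginal KL against the prior directly.
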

{\bf Remark:}
Theorem $\ref{thm:elbo}$ establishes the upper bound of the loss function corresponding to the variational posterior.

%\subsection{Variational Contraction Rate}

Our next theorem studies how fast the variational posterior contrasts toward the true $\boldsymbol{\beta}^0$.
\begin{thm} \label{thm:contraction}
With dominating probability, for any slowly diverging sequence $M_n'$, we have
\[
\widehat{q}(\|\boldsymbol{\beta} - \boldsymbol{\beta}^0\|_2 \geq M_n'\sqrt{s\log(p_n\vee n)/n}) = o(1).
\]
\end{thm}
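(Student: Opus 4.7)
My plan is to deduce the $\ell_2$-contraction of $\widehat q$ from the ELBO upper bound of Theorem \ref{thm:elbo} by first translating it into a bound on the second moment $\mathbb{E}_{\widehat q}\|\boldsymbol{\beta}-\boldsymbol{\beta}^0\|_2^2$ and then applying Markov's inequality. Since $\widehat q$ minimises $\Omega$ and $\mathrm{KL}(q\|\pi)\ge 0$, Theorem \ref{thm:elbo} implies, with dominating probability,
$$\mathbb{E}_{\widehat q}\, l_n(P_0,P_{\boldsymbol{\beta}})\le C s\log(p_n\vee n)\quad\text{and}\quad \mathrm{KL}(\widehat q\|\pi)\le C s\log(p_n\vee n).$$
Expanding $l_n(\boldsymbol{\beta})=\|\boldsymbol{X}(\boldsymbol{\beta}-\boldsymbol{\beta}^0)\|_2^2/(2\sigma^2)-\sigma^{-1}\boldsymbol{\epsilon}^\top\boldsymbol{X}(\boldsymbol{\beta}-\boldsymbol{\beta}^0)$ and taking expectations, the cross term collapses to a linear functional of $\mathbb{E}_{\widehat q}\boldsymbol{\beta}-\boldsymbol{\beta}^0$; on the standard event $\{\|\boldsymbol{X}^\top\boldsymbol{\epsilon}\|_\infty\le c\sqrt{n\log(p_n\vee n)}\}$, which by Condition \ref{cond1} holds with dominating probability, I would control it by Hölder's inequality. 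Combined with the identity $\mathbb{E}_{\widehat q}\|\boldsymbol{X}(\boldsymbol{\beta}-\boldsymbol{\beta}^0)\|_2^2=\|\boldsymbol{X}(\mathbb{E}_{\widehat q}\boldsymbol{\beta}-\boldsymbol{\beta}^0)\|_2^2+\mathrm{tr}(\boldsymbol{X}\mathrm{Cov}_{\widehat q}\boldsymbol{X}^\top)$, this yields a prediction-error bound governing both the variational mean-shift and the variational covariance.

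The next step converts this prediction-error bound to an $\ell_2$-estimation bound. Condition \ref{cond2} provides $\|\boldsymbol{X}u\|_2^2\ge n\lambda_0\|u\|_2^2$ whenever $u$ has support of size at most $\bar p$, so I must argue that $\widehat q$ is effectively concentrated on near-sparse vectors. The lever is the KL upper bound: under the mean-field student-$t$ family with prior scale $b_n$ of Condition \ref{cond:bn}, the per-coordinate cost $\mathrm{KL}(\widehat q_j\|\pi_j)$ is monotone in $\mu_j^2+b_j/(a_j-1)$, and a pigeonhole argument then limits the number of coordinates with non-negligible variational mass to at most $\bar p$. Denoting this index set by $\mathcal S$, I split $\boldsymbol{\beta}-\boldsymbol{\beta}^0=u_{\mathcal S}+u_{\mathcal S^c}$; the restricted eigenvalue bound applies to $u_{\mathcal S}$, while $\mathbb{E}_{\widehat q}\|u_{\mathcal S^c}\|_2^2$ is bounded by the sum of the remaining tiny variational second moments, which Condition \ref{cond:bn}'s upper bound on $b_n/a_0$ makes $o(s\log(p_n\vee n)/n)$. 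The same splitting tames the $\ell_1$ noise term via $\|u\|_1\le \sqrt{|\mathcal S|}\|u_{\mathcal S}\|_2+\|u_{\mathcal S^c}\|_1$ followed by Young's inequality, after which the eigenvalue lower bound folds everything into the target rate.

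The hardest step will be this effective-sparsity decomposition: from the single scalar inequality $\mathrm{KL}(\widehat q\|\pi)\le Cs\log(p_n\vee n)$ one must simultaneously conclude that only $O(\bar p)$ variational means $\mu_j$ are non-negligible and that for the remaining coordinates the variational variance $b_j/(a_j-1)$ sits at the prior scale $b_n/a_0$. Both claims should follow because pushing a coordinate away from the prior spike incurs a per-coordinate KL cost of order $\log(p_n\vee n)$ under the polynomial scaling of Condition \ref{cond:bn}, so more than $O(\bar p)$ such deviations would already exhaust the allowed total KL budget. Granted this step, combining with the prediction-error inequality yields $\mathbb{E}_{\widehat q}\|\boldsymbol{\beta}-\boldsymbol{\beta}^0\|_2^2=O(s\log(p_n\vee n)/n)$, and Markov's inequality completes the proof:
$$\widehat q\bigl(\|\boldsymbol{\beta}-\boldsymbol{\beta}^0\|_2\ge M_n'\sqrt{s\log(p_n\vee n)/n}\bigr)\le \frac{\mathbb{E}_{\widehat q}\|\boldsymbol{\beta}-\boldsymbol{\beta}^0\|_2^2}{(M_n')^2\,s\log(p_n\vee n)/n}=O((M_n')^{-2})=o(1).$$
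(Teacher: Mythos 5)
Your route is genuinely different from the paper's: the paper does not bound $\mathbb{E}_{\widehat q}\|\boldsymbol{\beta}-\boldsymbol{\beta}^0\|_2^2$ at all, but instead truncates the parameter space to $B_n$ (vectors with at most $\widetilde p$ coordinates exceeding $a_n\asymp\sqrt{s\log(p_n\vee n)/n}/p_n$), builds Birg\'e-type tests on $B_n$ (Lemma \ref{lm:testing}), transfers them to the variational posterior via the Donsker--Varadhan representation (Lemmas \ref{lm:Donsker}, \ref{lm:risk1}), and separately kills the mass on $B_n^c$ through a data-processing inequality on the indicators $\gamma_j=1(|\beta_j/\sigma|>a_n)$ (Lemma \ref{lm:risk2}). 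Your plan could in principle work, but as written it has two genuine gaps. First, the claim that Theorem \ref{thm:elbo} gives $\mathrm{KL}(\widehat q\|\pi)\le Cs\log(p_n\vee n)$ ``since $\mathrm{KL}\ge 0$'' is backwards: nonnegativity of the KL bounds $\int l_n\widehat q$, but to bound the KL you need a \emph{lower} bound on $\int l_n(P_0,P_{\boldsymbol{\beta}})\widehat q\,d\boldsymbol{\beta}$, which can be negative. The paper supplies this in Lemma \ref{lm:risk2} by writing $-\int l_n\widehat q=(2\sigma^2)^{-1}[-c_0+O_p(\sqrt{c_0})]\le O_p(1)$, i.e.\ completing the square on the Gaussian cross term; your sketch omits this entirely.

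Second, the effective-sparsity step that you yourself flag as hardest is where the proof actually lives, and the sketch does not close it. (i) You must adjoin $\xi^0$ to $\mathcal S$ by hand: a signal coordinate with negligible variational mass lands in $\mathcal S^c$ and contributes $(\beta_j^0)^2$ to $\mathbb{E}_{\widehat q}\|u_{\mathcal S^c}\|_2^2$, which is not ``a tiny variational second moment.'' (ii) Because both $q_j$ and $\pi_j$ are heavy-tailed, a per-coordinate KL below the pigeonhole threshold $\tau=Cs\log(p_n\vee n)/|\mathcal S|$ only constrains $|\mu_j|$ and the scale of $q_j$ to be at most $s_0(p_n\vee n)^{O(\tau/\log(p_n\vee n))}$, not ``at the prior scale''; whether $p_n$ such coordinates sum to $o(s\log(p_n\vee n)/n)$ depends delicately on the exponents $p_n^{2+1/a_0}(p_n\vee n)^{\delta/a_0}$ in Condition \ref{cond:bn}. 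Worse, your Young's-inequality step produces an additive term of order $|\mathcal S|\log(p_n\vee n)/(n\lambda_0)$, so you need $|\mathcal S|\lesssim s$ up to slowly diverging factors, which forces $\tau\asymp\log(p_n\vee n)$ and correspondingly weakens the control on $\mathcal S^c$ --- the two requirements pull in opposite directions, and resolving that tension is precisely what the paper's choice of $a_n$, the split into the sets $A$ and $A^c$, and the Chernoff bound in Lemma \ref{lm:risk2} accomplish. (A smaller point: the Markov step needs $a_j>1$ for every coordinate of the minimiser, else the second moment under $\widehat q$ is infinite; this deserves a remark.) Until the decomposition in (i)--(ii) is carried out quantitatively, the proposal is a plausible program rather than a proof.
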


{\bf Remark:}
Theorem \ref{thm:contraction} implies that the contraction rate of the variational posterior $\widehat q(\boldsymbol{\beta})$ is of order $\sqrt{s\log(p_n\vee n)/n}$. Under low dimensional setting, it reduces to $\sqrt{s/n}\log^{0.5}(n)$ which is the optimal rate up to a logarithmic term;
Under high dimensional setting, it reduces to $\sqrt{s\log(p_n)/n}$ which is the near-optimal convergence rate\footnote{The optimal is of order $\sqrt{s\log(p_n/s)/n}$.} commonly achieved in the literature.
In other words, there is little loss in term of distributional convergence asymptotics by implementing variational approximation. The variational inference procedure delivers consistent Bayesian inferences.

\section{IMPLEMENTATION}

\subsection{Updating Equations}

%\textcolor{red}{mention we minimize the joint KL of beta and lambda. I ask you to do some toy simulation, in order to show minimizing joint KL of beta and lambda lead to similar result of minimizing the marginal beta kl. Did you do it?}

%\textcolor{blue}{I haven't done it since I will need to write python code for it. I will add it later}
The direct optimization of the negative ELBO (\ref{eq:nelbo}) requires stochastic gradient descent algorithm, since there is no closed form for the KL divergence between two student-$t$ distributions. Therefore, for the purpose of efficient optimization, we instead consider minimizing the KL divergence of the joint distribution of $\boldsymbol{\beta}$ and $\boldsymbol{\lambda}$, where the negative ELBO is defined as
\begin{equation} \label{eq:loss}
\begin{split}
 \Omega =-\int \log p(\boldsymbol{Y}|\boldsymbol{\beta}, \lambda)q(\boldsymbol{\beta}|\boldsymbol{\lambda})q(\boldsymbol{\lambda})d\boldsymbol{\beta} d\boldsymbol{\lambda} +
\int \mbox{KL}(q(\boldsymbol{\beta}|\boldsymbol{\lambda})\|\pi(\boldsymbol{\beta}|\boldsymbol{\lambda})) q(\boldsymbol{\lambda})d\boldsymbol{\lambda} + \mbox{KL}(q(\boldsymbol{\lambda})\|\pi(\boldsymbol{\lambda})).
\end{split}
\end{equation}
{As showed by the toy examples in the appendix, the variational inference results derived based on minimizing the KL divergence of the joint distribution of $(\boldsymbol{\beta},\boldsymbol{\lambda})$ has little difference to the ones based on minimizing the KL divergence of marginal distribution of $\boldsymbol{\beta}$.}

We minimize (\ref{eq:loss}) by iteratively updating variational parameters in the fashion of coordinate descent. The updating equations are provided in below, where detailed derivation can be found in the appendix.

\paragraph{Updating $\boldsymbol{\mu}$} By fixing $a_j's$ and $b_j's$, the mean vector $\boldsymbol{\mu}=(\mu_1, \ldots, \mu_{p_n})^T$ is updated by 
\begin{equation} \label{eq:mu}
\boldsymbol{\mu} =  (\boldsymbol{X}^T\boldsymbol{X}+\sigma^2\boldsymbol{\Lambda})^{-1}\boldsymbol{X}^T\boldsymbol{Y},
\end{equation}
where $\boldsymbol{\Lambda}=\mbox{diag}(a_1/b_1,\dots,a_{p_n}/b_{p_n})$.

\paragraph{Updating $a_j$} By fixing $\mu_j$ and $b_j$, $a_j$ is updated by solving the following equation
\begin{equation} \label{eq:a}
-\frac{n_j}{2\sigma^2}\frac{b_j}{(a_j-1)^2} + \frac{\mu^2_j/2+b_n}{b_j} + (a_j - a_0)\psi_1(a_j) - 1=0.
\end{equation}

\paragraph{Updating $b_j$} By fixing $\mu_j$ and $a_j$, $b_j$ is updated by 
\begin{equation} \label{eq:b}
b_j = \frac{-a_0+\sqrt{a^2_0+2n_ja_j(\mu^2_j/2 + b_n)/\sigma^2(a_j-1)}}{n_j/\sigma^2(a_j-1)}.
\end{equation}

\subsection{Computation for Large $p_n$}
The major computational bottleneck of the above updating rule is the inversion of the large $p_n \times p_n$ matrix $(\boldsymbol{X}^T\boldsymbol{X}+\sigma^2\boldsymbol{\Lambda})$ in (\ref{eq:mu}), which could lead to huge computation cost. 

Instead, (\ref{eq:mu}) could be improved by using the blockwise update strategy introduced by \cite{Ishwaran2005Spike}. Specifically, decompose $\boldsymbol{\mu}$ as $(\boldsymbol{\mu}_{(1)}, \ldots, \boldsymbol{\mu}_{(B)})^T$, $\boldsymbol{\Lambda}$ as $\mbox{diag}(\boldsymbol{\Lambda}_{(1)}, \ldots, \boldsymbol{\Lambda}_{(B)})$ and $\boldsymbol{X}$ as $[\boldsymbol{X}_{(1)}, \ldots, \boldsymbol{X}_{(B)}]$, where $B$ is the number of blocks. Denote the exclusion of the $k$th block using subscript $(-k)$, then the blockwise update for $\boldsymbol{\mu}$ is
\begin{equation} \label{eq:mu_block}
    \boldsymbol{\mu}_{(k)}=(\boldsymbol{X}^T_{(k)}\boldsymbol{X}_{(k)}+\sigma^2\boldsymbol{\Lambda}_{(k)})^{-1}\boldsymbol{X}_{(k)}^T(\boldsymbol{Y}-\boldsymbol{X}_{(-k)}\boldsymbol{\mu}_{(-k)}),
\end{equation}
for $k = 1, \ldots, B$. The blockwise update will reduce the order of computational complexity from $O(p_n^3)$ to $O(B^{-2}p_n^3)$ \citep{Ishwaran2005Spike}, which could alleviate the computation burden when $p_n$ is huge.

To summarize, the variational inference with Student-$t$ prior is shown in Algorithm \ref{alg:vb}.

\begin{algorithm}
\caption{Variational inference with Student-$t$ prior.} \label{alg:vb}
\begin{algorithmic}[1]
\State {Hyperparameters: $a_0$, $b_n$}
\State {\textbf{Initialize} $\boldsymbol{\mu}, \{a_j\}^{p_n}_{j=1}, \{b_j\}^{p_n}_{j=1}$} 
\Repeat
    \For{$k = 1 \mbox{ to }B $}
        \State {$\boldsymbol{\mu}_{(k)}$ $\gets$ apply equation  (\ref{eq:mu_block})}
    \EndFor
    
    \FORALLP {$j \in \{1, \ldots, p_n\}$}
    \State {$a_j$ $\gets$ solve equation (\ref{eq:a})} 
    \State {$b_j$ $\gets$ apply equation (\ref{eq:b}) } 
    \ENDFAP
    \State {$\Omega$ $\gets$ $\boldsymbol{\mu}, \{a_j\}^{p_n}_{j=1}, \{b_j\}^{p_n}_{j=1}$ using (\ref{eq:loss})}
\Until{convergence of $\Omega$}
\State \Return {$\boldsymbol{\mu}, \{a_j\}^{p_n}_{j=1}, \{b_j\}^{p_n}_{j=1}$}
\end{algorithmic}
\end{algorithm}

Note that it is crucial that the algorithm allows us to update the key variational parameter $\boldsymbol\mu$ blockwisely. Comparing to Algorithm 1 of \cite{Ray2020Variational} which has to update variational parameter entrywisely, our algorithm has a much better convergence speed. In addition, Algorithm 1 of \cite{Ray2020Variational} also has to conduct more iterations of univariate numerical optimizations. Therefore, as showed by our simulation studies, our algorithm has much faster computing speed.

\section{NUMERICAL STUDIES}
\begin{table*}[b!]
\centering
\footnotesize
\caption{Regression Results for Example 1 (a): Strong Signal Case}
\label{tb:exp1a}
\begin{tabular}{lcccccc}
\toprule
         & \multicolumn{4}{c}{Bayesian} & \multicolumn{2}{c}{Non-Bayesian}                          \\ \cmidrule(lr){2-5} \cmidrule(lr){6-7} 
         & t-VB &t-MCMC& Laplace & varbvs & SSLASSO & EMVS  \\ \midrule
RMSE     & 0.29 $\pm$ 0.18 &0.21 $\pm$ 0.20  & 0.23 $\pm$ 0.05        & 0.66 $\pm$ 0.38  & 0.13 $\pm$ 0.11 & 0.88 $\pm$ 0.04   \\
FDR      & 0.18 $\pm$ 0.15 &0.02 $\pm$ 0.09  & 0.01 $\pm$ 0.05       & 0.09 $\pm$ 0.17  & 0.01 $\pm$ 0.08 & 0.04 $\pm$ 0.11   \\
TPR      & 0.96 $\pm$ 0.11 &0.95 $\pm$ 0.13  & 0.99 $\pm$ 0.02       & 0.42 $\pm$ 0.41  & 0.99 $\pm$ 0.07 & 0.17 $\pm$ 0.07    \\
Coverage of $\xi^0$&0.88 $\pm$ 0.03&0.81 $\pm$ 0.03 & 0.87 $\pm$ 0.04 & 0.23 $\pm$ 0.04& -&-\\
Coverage of $(\xi^0)^c$ & 0.99 $\pm$ 0.01 & 0.99 $\pm$ 0.00& 0.99 $\pm$ 0.00& 0.99 $\pm$ 0.00&-&-\\
Run time & 0.54 $\pm$ 0.05 &23.71 $\pm$ 0.71  & 7.65 $\pm$ 2.65        & 0.29 $\pm$ 0.11  & 0.06 $\pm$ 0.01 & 0.35 $\pm$ 0.06   \\ \bottomrule
\end{tabular}
\end{table*}

In this section, we validate the effectiveness of our method via simulation experiments. To satisfy Condition \ref{cond:bn}, throughout this section, we let $a_0=2$ and $b_n/a_0 =\log (p_n\vee n)/[np_n^{2+1/a_0}(p_n\vee n)^{1/a_0}]$.
% \textcolor{red}{condition \ref{cond:bn} involves unknown $s$? okay, I think you can replace s by 1.}. 
% \textcolor{blue}{sure}
We use Lasso estimator to initialize $\boldsymbol{\mu}$. $a_j$ and $b_j$ are initialized as $(a_0+0.5)$ and $(b_n + \mu_j^2)$ respectively. The following rule is used to derive variable selection results: if the $95\%$ credible interval of marginal $t$ variational posterior contains 0, then the corresponding predictor is not selected, and vice versa. This method of Bayesian model selection under shrinkage priors is discussed by \citep{van2017uncertainty}. More sophisticated approaches under variational Bayesian shrinkage for model selection could be a future study direction.

Both variational inference ($t$-VB) and MCMC ($t$-MCMC) are implemented under the same student-$t$ prior for fair comparison, where $t$-MCMC is computed by Gibbs sampler \citep{song2017nearly}. We also compare our method to the following competitive methods: variational Bayes for spike-and-slab priors with Laplace slabs (Laplace) \citep{Ray2020Variational}, variational Bayes for spike-and-slab priors with Gaussian slabs (varbvs) \citep{Carbonetto2012Scalable},  the spike-and-slab LASSO (SSLASSO) \citep{Rockova2018spike}, and the EM algorithm for spike-and-slab prior (EMVS) \citep{Rockova2014EMVS}. 
% and the fractional likelihood empirical Bayes approach using MCMC for re-centered Gaussian slab priors (ebreg) \citep{Martin2017empirical}.

For $t$-MCMC, we run Gibbs update for 1000 iterations with 200 burning in, and the initialization is the same as $t$-VB. We employ the blocklization \citep{Ishwaran2005Spike} for the Gibbs update. For Laplace, we use hyper-parameter $a_0=1, b_0=n$ and $\lambda=1$. The ridge estimator $(\boldsymbol{X}^T\boldsymbol{X}+\boldsymbol{I})^{-1}\boldsymbol{X}^T\boldsymbol{Y}$ is used for initialization and the unknown $\sigma$ is estimated by selectiveInference package \citep{Reid2016Study}. For other methods, we use their associated R packages with default parameters. All the methods are implemented on the MacBook Pro with 2.7 GHz Intel Core i7.

The metrics reported are the Root Mean Squared Error between the posterior mean estimator $\widehat{\boldsymbol{\beta}}$ and $\boldsymbol{\beta}^0$ (RMSE), the False Discovery Rate (FDR), True Postitive Rate (TPR), and the run time. For Bayesian methods, the Coverage rates of $95 \%$ credible intervals for non-zero coefficients $\xi^0$ and zero coefficients $(\xi^0)^c$ are also calculated. All the experiments are repeated 100 times and the mean metric together with its standard deviation are reported.

\subsection{Example 1: Moderate Dimension Case}
This is an example similar to the one in \cite{Ray2020Variational}. Let $n=100$, $p_n=400$ and $s=20$. All the nonzero coefficients are equal to $\log(n)$ (strong) or $\log(n)/2$ (weak) and their positions are randomly located within the $p_n$ dimension coefficient vector. Take the design matrix $X_{ij} \overset{iid}{\sim} \mathcal{N}(0, 1)$ and assume $\sigma$ is known that equals 4. Since $p_n$ is moderate, we choose $B=1$ when update $\boldsymbol{\mu}$ and use 5 blocks for Gibbs update. 

Table \ref{tb:exp1a} shows for relatively large signal, SSLASSO achieves the best estimation accuracy and the smallest selection error with the shortest run time, however it can not give second-order inferences. Among Bayesian methods, our method achieves estimation accuracy close to that of MCMC with the shortest run time. Meanwhile, the variable selection errors and the coverage rates of our method are also close to those of MCMC. Table \ref{tb:exp1b} exhibits when the signal is relatively weak, our method obtains the estimation accurracy and selection error close to the best ones (Laplace) with much shorter run time. The MCMC is underperformed in this case probably due to insufficient number of Gibbs iterations. Note that the FDR for EMVS is undefined since none of the predictors is selected.

\begin{table*}[t!]
\centering
\footnotesize
\caption{Regression Results for Example 1 (b): Weak Signal Case.}
\label{tb:exp1b}
\begin{tabular}{lcccccc}
\toprule
         & \multicolumn{4}{c}{Bayesian} & \multicolumn{2}{c}{Non-Bayesian}                          \\ \cmidrule(lr){2-5} \cmidrule(lr){6-7}          & t-VB &t-MCMC & Laplace & varbvs & SSLASSO & EMVS \\ \midrule
RMSE     & 0.38 $\pm$ 0.07&0.46 $\pm$ 0.78  &0.36 $\pm$ 0.07  & 0.45 $\pm$ 0.07   &0.42 $\pm$ 0.10 &0.46 $\pm$ 0.01   \\
FDR      & 0.29 $\pm$ 0.15&0.12 $\pm$ 0.16 &0.16 $\pm$ 0.15  & 0.12 $\pm$ 0.21   &0.26 $\pm$ 0.29 & -    \\
TPR      & 0.57 $\pm$ 0.20&0.27 $\pm$ 0.10  &0.62 $\pm$ 0.18  & 0.19 $\pm$ 0.22   &0.47 $\pm$ 0.18 & 0.00 $\pm$ 0.00   \\
Coverage of $\xi^0$ & 0.40 $\pm$ 0.05& 0.22 $\pm$ 0.03&0.54 $\pm$ 0.04 &0.14 $\pm$ 0.03&-&-\\
Coverage of $(\xi^0)^c$ &0.99 $\pm$ 0.01   & 0.99 $\pm$ 0.00&0.99 $\pm$ 0.01 & 0.99 $\pm$ 0.00&-&-\\
Run time & 0.54 $\pm$ 0.04& 23.52 $\pm$ 0.38  &13.78 $\pm$ 8.71 & 0.29 $\pm$ 0.13   &0.07 $\pm$ 0.02 & 0.18 $\pm$ 0.02  \\ \bottomrule
\end{tabular}
\end{table*}

\subsection{Example 2: High Dimension Case}
We consider an example similar to the one in \cite{Rockova2014EMVS}. Let $n=100$, $p=1000$ and $\boldsymbol{\beta}^0=(3,2,1,0,\ldots,0)^T$. Generate the design matrix $X_{ij} \overset{iid}{\sim} \mathcal{N}(0, 1)$. Assume $\sigma=1$ and it is unknown in the experiment. For our method, we use the Empirical Bayes estimator for $\sigma$ here. Specifically, by optimizing $\Omega$ w.r.t. $\sigma$, the Empirical Bayes (EB) update of $\sigma$ follows
\begin{equation} \label{eq:sigma}
\sigma = \sqrt{\frac{(\boldsymbol{Y}-\boldsymbol{X}\boldsymbol{\mu})^T(\boldsymbol{Y}-\boldsymbol{X}\boldsymbol{\mu})+\sum^{p_n}_{j=1}\frac{n_jb_j}{a_j-1}}{n}}.
\end{equation}

Due to the high dimensionality, we choose $B=10$ when updating $\boldsymbol{\mu}$ and also use 10 blocks for Gibbs update. The results are reported in Table \ref{tb:exp2}.  

Table \ref{tb:exp2} shows all the methods achieve good estimation accuracies expect Laplace and EMVS. Our method also obtains similar selection errors and coverage rates to those of MCMC with much shorter time. Again, the FDR for EMVS is undefined since all the estimated coefficients are not selected.

% \begin{table*}[h!]
% \centering
% \caption{Example 2}
% \label{tb:exp1}
% \begin{tabular}{lcccc}
% \toprule
% Method    & $l_2$ error & FDR & TPR & Run Time \\ \midrule
% Student t & 0.32 $\pm$ 0.10    & 0.00 $\pm$ 0.04   & 1.00 $\pm$ 0.00    & 0.48 $\pm$ 0.05      \\
% Laplace   &                    &                   &                    &                      \\
% varbvs    & 0.18 $\pm$ 0.10    & 0.02 $\pm$ 0.07   & 1.00 $\pm$ 0.00    & 0.32 $\pm$ 0.11          \\
% SSLASSO   & 0.17 $\pm$ 0.07    & 0.00 $\pm$ 0.00   & 1.00 $\pm$ 0.00    & 0.75 $\pm$ 0.10 \\
% EMVS      & 3.57 $\pm$ 0.02    & -                 & 0.00 $\pm$ 0.00    & 0.17 $\pm$ 0.01 \\
% ebreg     & 0.20 $\pm$ 0.10    & 0.01 $\pm$0.05    & 0.99 $\pm$ 0.03    & 11.56 $\pm$ 2.21         \\ \bottomrule
% \end{tabular}
% \end{table*}
\begin{table*}[t!]
\centering
\footnotesize
\caption{Regression Results for Example 2.}
\label{tb:exp2}
\begin{tabular}{lcccccc}
\toprule
         & \multicolumn{4}{c}{Bayesian} & \multicolumn{2}{c}{Non-Bayesian}                          \\ \cmidrule(lr){2-5} \cmidrule(lr){6-7}          & t-VB &t-MCMC & Laplace & varbvs & SSLASSO & EMVS  \\ \midrule
RMSE      & 0.01 $\pm$ 0.00 &0.01 $\pm$ 0.00&0.06 $\pm$ 0.00         & 0.01 $\pm$ 0.00 & 0.01 $\pm$ 0.00 & 0.11 $\pm$ 0.00   \\
FDR      & 0.00 $\pm$ 0.04 &0.00 $\pm$ 0.00& 0.07 $\pm$ 0.16          & 0.02 $\pm$ 0.07 & 0.00 $\pm$ 0.00 & -               \\
TPR      & 1.00 $\pm$ 0.00 &1.00 $\pm$ 0.00 & 1.00 $\pm$ 0.00        & 1.00 $\pm$ 0.00 & 1.00 $\pm$ 0.00 & 0.00 $\pm$ 0.00  \\
Coverage of $\xi^0$ &0.99 $\pm$ 0.01 &0.95 $\pm$ 0.03 &0.90 $\pm$ 0.03 &0.16 $\pm$ 0.02&-&-\\
Coverage of $(\xi^0)^c$ & 1.00 $\pm$ 0.00  &1.00 $\pm$ 0.00 &0.99 $\pm$ 0.00 &0.99 $\pm$ 0.00 &-&-\\
Run time & 0.48 $\pm$ 0.05 &94.32 $\pm$ 3.40& 37.61 $\pm$ 23.55         & 0.32 $\pm$ 0.11 & 0.75 $\pm$ 0.10 & 0.17 $\pm$ 0.01  \\ \bottomrule
\end{tabular}
\end{table*}

\section{DISCUSSION}
We proposed a scalable variational inference algorithm for high dimensional linear regression under shrinkage priors. The established theoretical properties are justified by empirical studies. A possible future direction is to explore and compare efficient implementation for variational inference with other heavy tail shrinkage priors besides the Student-$t$.

\bibliography{ref}

\begin{thebibliography}{37}
\providecommand{\natexlab}[1]{#1}
\providecommand{\url}[1]{\texttt{#1}}
\expandafter\ifx\csname urlstyle\endcsname\relax
  \providecommand{\doi}[1]{doi: #1}\else
  \providecommand{\doi}{doi: \begingroup \urlstyle{rm}\Url}\fi

\bibitem[Armagan et~al.(2013)Armagan, Dunson, Lee, Bajwa, and
  Strawn]{Armagan2013Posterior}
A.~Armagan, D.~B. Dunson, J.~Lee, W.~U. Bajwa, and N.~Strawn.
\newblock Posterior consistency in linear models under shrinkage priors.
\newblock \emph{Biometrika}, 100:\penalty0 1011--1018, 2013.

\bibitem[Bhadra et~al.(2019)Bhadra, Datta, Polson, and
  Willard]{Bhadra2019horseshooe}
Anindya Bhadra, Jyotishka Datta, N.~G. Polson, and B.~T. Willard.
\newblock The horseshoe-like regularization for feature subset selection.
\newblock \emph{Sankhya B}, 2019.

\bibitem[Blei et~al.(2017)Blei, Kucukelbir, and McAuliffe]{Blei2017variational}
David Blei, Alp Kucukelbir, and Jon McAuliffe.
\newblock Variational inference: A review for statisticians.
\newblock \emph{Journal of the American Statistical Association}, 112:\penalty0
  859--877, 2017.

\bibitem[Boucheron et~al.(2013)Boucheron, Lugosi, and
  Massart]{Boucheron2013Concentration}
Stephane Boucheron, Gabor Lugosi, and Pascal Massart.
\newblock \emph{Concentration inequalities: A nonasymptotic theory of
  independence}.
\newblock Oxford University press, 2013.

\bibitem[Carbonetto and Stephens(2012)]{Carbonetto2012Scalable}
P.~Carbonetto and M.~Stephens.
\newblock Scalable variational inference for bayesian variable selection in
  regression, and its accuracy in genetic association studies.
\newblock \emph{Bayesian Analysis}, 7:\penalty0 73--107, 2012.

\bibitem[Carvalho et~al.(2010)Carvalho, Polson, and
  Scott]{Carvalho2010horseshoe}
C.~M. Carvalho, N.~G. Polson, and J.~G. Scott.
\newblock The horseshoe estimator for sparse signals.
\newblock \emph{Biometrika}, 97:\penalty0 465--480, 2010.

\bibitem[Carvalho et~al.(2009)Carvalho, Polson, and
  Scott]{carvalho2009handling}
Carlos~M Carvalho, Nicholas~G Polson, and James~G Scott.
\newblock Handling sparsity via the horseshoe.
\newblock In \emph{Artificial Intelligence and Statistics}, pages 73--80, 2009.

\bibitem[Castillo et~al.(2015)Castillo, Schmidt-Hieber, and Van~der
  Vaart]{Castillo2015Bayesian}
I.~Castillo, J.~Schmidt-Hieber, and A.~W. Van~der Vaart.
\newblock Bayesian linear regression with sparse priors.
\newblock \emph{The Annals of Statistics}, pages 1986--2018, 2015.

\bibitem[Gao et~al.(2020)Gao, van~der Vaart, and Zhou]{gao2020general}
Chao Gao, Aad~W van~der Vaart, and Harrison~H Zhou.
\newblock A general framework for bayes structured linear models.
\newblock \emph{Annals of Statistics}, 48:\penalty0 2848--2878, 2020.

\bibitem[George and McCulloch(1993)]{George1993variable}
Edward George and Robert McCulloch.
\newblock Variable selection via gibbs sampling.
\newblock \emph{Journal of the American Statistical Association}, 88:\penalty0
  881--889, 1993.

\bibitem[Ghosal(1999)]{Ghosal1999Asymptotic}
S.~Ghosal.
\newblock Asymptotic normality of posterior distributions in high- dimensional
  linear models.
\newblock \emph{Bernoulli}, 5:\penalty0 315--331, 1999.

\bibitem[Ghosh and Chakrabarti(2015)]{ghosh2015posterior}
Prasenjit Ghosh and Arijit Chakrabarti.
\newblock Posterior concentration properties of a general class of shrinkage
  estimators around nearly black vectors.
\newblock \emph{arXiv preprint arXiv:1412.8161}, 2015.

\bibitem[Griffin and Brown(2012)]{Griffin2012Structuring}
J.~E. Griffin and P.~J. Brown.
\newblock Structuring shrinkage: some correlated priors for regression.
\newblock \emph{Biometrika}, 99:\penalty0 481--487, 2012.

\bibitem[Hans(2009)]{Hans2009Bayesian}
C.~Hans.
\newblock Bayesian lasso regression.
\newblock \emph{Biometrika}, 96:\penalty0 835--845, 2009.

\bibitem[Hastie et~al.(2001)Hastie, Tibshirani, and
  Friedman]{Hastie2001Elements}
Trevor Hastie, Robert Tibshirani, and J.~H. Friedman.
\newblock \emph{The Elements of Statistical Learning: Data Mining, Inference,
  and Prediction}.
\newblock Springer, New York, 2001.

\bibitem[Huang et~al.(2016)Huang, Wang, and Liang]{Huang2016variational}
Xichen Huang, Jin Wang, and Feng Liang.
\newblock A variational algorithm for bayesian variable selection.
\newblock \emph{arXiv preprint arXiv:1602.07640}, 2016.

\bibitem[Ishwaran and Rao(2005)]{Ishwaran2005Spike}
H.~Ishwaran and J.S. Rao.
\newblock Spike and slab variable selection: frequentist and bayesian
  strategies.
\newblock \emph{The Annals of Statistics}, pages 730--773, 2005.

\bibitem[Johnson and Rossel(2012)]{Johnson2012Bayesian}
V.~E. Johnson and D.~Rossel.
\newblock Bayesian model selection in high-dimensional settings.
\newblock \emph{Journal of the American Statistical Association}, 107:\penalty0
  649--660, 2012.

\bibitem[Jordan et~al.(1999)Jordan, Ghahramani, and
  Jaakkola]{Jordan1999Variational}
Michael Jordan, Zoubin Ghahramani, and Tommi Jaakkola.
\newblock An introduction to variational methods for graphical models.
\newblock \emph{Machine Learning}, 1999.

\bibitem[Kingma and Ba(2015)]{kingma2015adam}
Diederik Kingma and J.~L. Ba.
\newblock Adam: A method for stochastic optimization.
\newblock In \emph{International Conference on Learning Representations (ICLR
  2015)}, 2015.

\bibitem[Mitchell and Beauchamp(1988)]{Mitchell1988Bayesian}
T.~J. Mitchell and J.~J. Beauchamp.
\newblock Bayesian variable selection in linear regression.
\newblock \emph{Journal of the American Statistical Association}, 83:\penalty0
  1023--1032, 1988.

\bibitem[Narisetty and He(2014)]{Narisetty2014Bayesian}
N.~N. Narisetty and X.~He.
\newblock Bayesian variable selection with shrinking and diffusing priors.
\newblock \emph{The Annals of Statistics}, 42, 2014.

\bibitem[Park and Casella(2008)]{Park2009Bayesian}
T.~Park and G.~Casella.
\newblock The {Bayesian} {Lasso}.
\newblock \emph{Journal of the American Statistical Association}, 103:\penalty0
  681--686, 2008.

\bibitem[Pati et~al.(2018)Pati, Bhattacharya, and Yang]{Pati2018on}
Debdeep Pati, Anirban Bhattacharya, and Yun Yang.
\newblock On the statistical optimality of variational bayes.
\newblock In \emph{Proceedings of the 21st International Conference on
  Artificial Intelligence and Statistics (AISTATS) 2018}, Lanzarote, Spain,
  2018.

\bibitem[Ray and Szabo(2020)]{Ray2020Variational}
Kolyan Ray and Botond Szabo.
\newblock Variational bayes for high-dimensional linear regression with sparse
  priors.
\newblock \emph{arXiv:1904.07150}, 2020.

\bibitem[Reid et~al.(2016)Reid, TIbshirani, and Friedman]{Reid2016Study}
S.~Reid, R.~TIbshirani, and J.~Friedman.
\newblock A study of error variance estimation in lasso regression.
\newblock \emph{Statistica Sinica}, 26:\penalty0 35--67, 2016.

\bibitem[Ro{\v c}kov{\'a} and George(2014)]{Rockova2014EMVS}
Veronika Ro{\v c}kov{\'a} and Edward George.
\newblock {EMVS}: the {EM} approach to {Bayesian} variable selection.
\newblock \emph{Journal of the American Statistical Association}, 109:\penalty0
  828--846, 2014.

\bibitem[Ro{\v c}kov{\'a} and George(2018)]{Rockova2018spike}
Veronika Ro{\v c}kov{\'a} and Edward George.
\newblock The spike-and-slab {LASSO}.
\newblock \emph{Journal of the American Statistical Association}, 113:\penalty0
  431--444, 2018.

\bibitem[Song and Liang(2014)]{Song2014Split}
Q.~Song and F.~Liang.
\newblock A split-and-merge {Bayesian} variable selection approach for
  ultra-high dimensional regression.
\newblock \emph{Journal of the Royal Statistical Society, Series B},
  77:\penalty0 947--972, 2014.

\bibitem[Song(2020)]{song2020bayesian}
Qifan Song.
\newblock Bayesian shrinkage towards sharp minimaxity.
\newblock \emph{Electronic Journal of Statistics}, 14:\penalty0 2714--2741,
  2020.

\bibitem[Song and Liang(2017)]{song2017nearly}
Qifan Song and Faming Liang.
\newblock Nearly optimal bayesian shrinkage for high dimensional regression.
\newblock \emph{arXiv:1712.08964}, 2017.

\bibitem[van~der Pas et~al.(2016)van~der Pas, Salomond, and
  Schmidt-Hieber]{van2016conditions}
SL~van~der Pas, J-B Salomond, and Johannes Schmidt-Hieber.
\newblock Conditions for posterior contraction in the sparse normal means
  problem.
\newblock \emph{Electronic journal of statistics}, 10:\penalty0 976--1000,
  2016.

\bibitem[van~der Pas et~al.(2017)van~der Pas, Szab{\'o}, and van~der
  Vaart]{van2017uncertainty}
St{\'e}phanie van~der Pas, Botond Szab{\'o}, and Aad van~der Vaart.
\newblock Uncertainty quantification for the horseshoe (with discussion).
\newblock \emph{Bayesian Analysis}, 12:\penalty0 1221--1274, 2017.

\bibitem[Van Der~Pas et~al.(2014)Van Der~Pas, Kleijn, and Van
  Der~Vaart]{van2014horseshoe}
St{\'e}phanie~L Van Der~Pas, Bas~JK Kleijn, and Aad~W Van Der~Vaart.
\newblock The horseshoe estimator: Posterior concentration around nearly black
  vectors.
\newblock \emph{Electronic Journal of Statistics}, 8:\penalty0 2585--2618,
  2014.

\bibitem[Vershynin(2012)]{Vershynin2012Introduction}
R.~Vershynin.
\newblock Introduction to the non-asymptotic analysis of random matrices.
\newblock In Y.~Eldar and G.~Kutyniok, editors, \emph{Compressed Sensing:
  Theory and Applications}, pages 210--268. Cambridge University Press, 2012.

\bibitem[Yang et~al.(2016)Yang, Wainwright, and Jordan]{Yang2016On}
Y.~Yang, M.~J. Wainwright, and M.~I. Jordan.
\newblock On the computational complexity of high-dimensional bayesian variable
  selection.
\newblock \emph{The Annals of Statistics}, 44:\penalty0 2497--2532, 2016.

\bibitem[Zhang(2010)]{Zhang2010Nearly}
C.~H. Zhang.
\newblock Nearly unbiased variable selection under minimax concave penalty.
\newblock \emph{The Annals of Statistics}, 38, 2010.

\end{thebibliography}

\newpage
\begin{center}
	\textbf{\LARGE APPENDIX}
\end{center}
\appendix
\vspace{30px}

\section{DETAILED PROOFS}
{\noindent \bf Proof of Theorem \ref{thm:elbo}}
\begin{proof}
The marginal prior distribution for $\beta_j$ is
\[
\pi(\beta_j) = \frac{1}{\sqrt{\nu_0}s_0}\Bigl(1 + \nu_0^{-1}\Bigl(\frac{\beta_j}{s_0}\Bigr)^2 \Bigr)^{-\frac{\nu_0+1}{2}},
\]
where $s_0 = \sqrt{b_n/a_0}$ and $\nu_0 = 2a_0$. We define $q^*(\beta_j)$ as follows
\[
q^*(\beta_j) = \frac{1}{\sqrt{\nu^*}s^*}\Bigl(1 + (\nu^*)^{-1}\Bigl(\frac{\beta_j - \beta^0_j}{s_j}\Bigr)^2 \Bigr)^{-\frac{\nu^*+1}{2}},
\]
where $s^*=\sqrt{b_n/a_0}$ and $\nu^* = 2a_0$, and it is sufficient to show that $\mbox{KL}(q^*(\boldsymbol{\beta})\|\pi(\boldsymbol{\beta}))+\int l_n(P_0, P_{\boldsymbol{\beta}})q^*(\boldsymbol{\beta})d\boldsymbol{\beta} \leq Cs\log(p_n\vee n).$

i) We first show 
\begin{equation} \label{eq:likelihood}
    \int l_n(P_0, P_{\boldsymbol{\beta}})q^*(\boldsymbol{\beta})d\boldsymbol{\beta} \leq C_1s\log(p_n\vee n).,
\end{equation}
for some $C_1>0$.
Note that
\[
\begin{split}
&l_n(P_0, P_{\boldsymbol{\beta}}) = \frac{1}{2\sigma^2}(\|\boldsymbol{Y} - \boldsymbol{X}\boldsymbol{\beta}\|^2_2 
- \|\boldsymbol{Y} - \boldsymbol{X}\boldsymbol{\beta}^0\|^2_2)\\
= &\frac{1}{2\sigma^2} (\|\boldsymbol{Y} - \boldsymbol{X}\boldsymbol{\beta}^0+ \boldsymbol{X}\boldsymbol{\beta}^0
-\boldsymbol{X}\boldsymbol{\beta})\|^2_2
-\|\boldsymbol{Y} - \boldsymbol{X}\boldsymbol{\beta}^0\|^2_2)\\
= &\frac{1}{2\sigma^2}(\|\boldsymbol{X}\boldsymbol{\beta}-\boldsymbol{X}\boldsymbol{\beta}^0\|^2_2 
+ 2\langle \boldsymbol{Y}-\boldsymbol{X}\boldsymbol{\beta}^0, \boldsymbol{X}\boldsymbol{\beta}^0 - \boldsymbol{X}\boldsymbol{\beta}\rangle).
\end{split}
\]
Denote
\[
\begin{split}
\mathcal{R}_1 &= \int \|\boldsymbol{X}\boldsymbol{\beta} - \boldsymbol{X}\boldsymbol{\beta}^0\|^2_2 q^*(\boldsymbol{\beta})d\boldsymbol{\beta}, \\
\mathcal{R}_2 &=\int \langle \boldsymbol{Y}-\boldsymbol{X}\boldsymbol{\beta}^0, \boldsymbol{X}\boldsymbol{\beta}^0 - \boldsymbol{X}\boldsymbol{\beta})\rangle q^*(\boldsymbol{\beta})d\boldsymbol{\beta}.
\end{split}
\]
Noting that $\boldsymbol{Y} - \boldsymbol{X}\boldsymbol{\beta}^0= \sigma\boldsymbol{\epsilon} \sim \mathcal{N}(0, \sigma^2I_n)$, then
\[
\begin{split}
\mathcal{R}_2 &= \int \sigma\boldsymbol{\epsilon}^T(\boldsymbol{X}\boldsymbol{\beta}^0 - \boldsymbol{X}\boldsymbol{\beta})q^*(\boldsymbol{\beta})d\boldsymbol{\beta}
\\
&= \sigma\boldsymbol{\epsilon}^T\int (\boldsymbol{X}\boldsymbol{\beta}^0 - \boldsymbol{X}\boldsymbol{\beta})q^*(\boldsymbol{\beta})d\boldsymbol{\beta}
\sim \mathcal{N}(0, c_f\sigma^2),
\end{split}
\]
where $c_f = \|\int (\boldsymbol{X}\boldsymbol{\beta}^0 - \boldsymbol{X}\boldsymbol{\beta})q^*(\boldsymbol{\beta})d\boldsymbol{\beta}\|^2_2 \leq \mathcal{R}_1$ due to Cauchy-Schwarz inequality. Then by Gaussian tail bound
\[
P_0(\mathcal{R}_2 \geq \mathcal{R}_1) \leq \exp(\frac{\mathcal{R}^2_1}{2\sigma^2\mathcal{R}_1}),
\]
which implies $\mathcal{R}_2 \leq \mathcal{R}_1$ w.h.p.. Therefore, to prove (\ref{eq:likelihood})
it suffices to establish that $\mathcal{R}_1=O(s\log(p_n\vee n))$. Note that
\[
\int \|\boldsymbol{X}\boldsymbol{\beta} - \boldsymbol{X}\boldsymbol{\beta}^0\|^2_2 q^*(\boldsymbol{\beta})d\boldsymbol{\beta} \leq \|\boldsymbol{X}\|^2_2 \int \|\boldsymbol{\beta}-\boldsymbol{\beta}^0\|^2_2q^*(\boldsymbol{\beta})d\boldsymbol{\beta}.
\]
where $\|\boldsymbol{X}\|_2$ is the spectral norm of matrix $\boldsymbol X$.  Since $\|\boldsymbol{X}\|^2_2 \leq tr(\boldsymbol{X}^T\boldsymbol{X}) = np_n$, and 
\[
\int \|\boldsymbol{\beta}-\boldsymbol{\beta}^0\|^2_2q^*(\boldsymbol{\beta})d\boldsymbol{\beta} = \sum^{p_n}_{j=1}s^{*2} \frac{\nu^*}{\nu^*-2} = p_n \frac{b_n}{a_0-1},
\]
then
\[
\int \|\boldsymbol{X}\boldsymbol{\beta} - \boldsymbol{X}\boldsymbol{\beta}^0\|^2_2 q^*(\boldsymbol{\beta})d\boldsymbol{\beta} \leq np_n^2\frac{b_n}{a_0-1} =O(s\log(p_n\vee n)).
\]
for sufficiently large $n$.

ii) We next show
\begin{equation} \label{eq:KL}
    \mbox{KL}(q^*(\boldsymbol{\beta})\|\pi(\boldsymbol{\beta})) \leq C_2s\log(p_n\vee n),
\end{equation}
for some $C_2>0$.

Note that
\[
\begin{split}
&\mbox{KL}(q^*(\boldsymbol{\beta})\|\pi(\boldsymbol{\beta}))=\sum_{j=1}^{p_n} \mbox{KL}(q^*({\beta_j})\|\pi({\beta_j})) \\
&=\sum_{j:\beta_j^0\neq 0} \mbox{KL}(q^*({\beta_j})\|\pi({\beta_j})).
\end{split}
\]
For each $j$,
\[
\begin{split}
&\mbox{KL}(q^*(\beta_j)\|\pi(\beta_j)) \\
= &\frac{\nu^*+1}{2}\int \log \frac{\nu^*s^{*2}+\beta_j^2}{\nu^*s^{*2}+(\beta_j - \beta^0_j)^2} q^*(\beta_j)d\beta_j.
\end{split}
\]

If $\beta^0_j > 0$, then $\frac{\nu^*s^{*2}+\beta_j^2}{\nu^*s^{*2}+(\beta_j - \beta^0_j)^2}$ is maximized at $\widehat{\beta}_j = \frac{\beta^0_j + \sqrt{(\beta^0_j)^2+4\nu^*s^{*2}}}{2}$, and the maximum is
\[
\begin{split}
&\frac{\nu^*s^{*2}+\widehat{\beta}_j^2}{\nu^*s^{*2}+(\widehat{\beta}_j - \beta^0_j)^2} 
= \frac{(\beta^0_j)^2 + \beta^0_j\sqrt{(\beta^0_j)^2+4\nu^*s^{*2}}+4\nu^*s^{*2}}{(\beta^0_j)^2 - \beta^0_j\sqrt{(\beta^0_j)^2+4\nu^*s^{*2}}+4\nu^*s^{*2}}\\
\leq& \frac{(\beta^0_j)^2 + \beta^0_j\sqrt{(\beta^0_j)^2+4\nu^*s^{*2}}+4\nu^*s^{*2}}{
(\beta^0_j)^2 - \beta^0_j(\beta_j^0+\frac{4\nu^*s_0^2}{2\beta_j^0})+4\nu^*s^{*2}}
= \frac{(\beta^0_j)^2 + \beta^0_j\sqrt{(\beta^0_j)^2+4\nu^*s^{*2}}+4\nu^*s^{*2}}{2\nu^*s^{*2}}.
\end{split}
\]
Therefore, for sufficiently large n, 
\[
\begin{split}
\mbox{KL}(q^*(\beta_j)\|\pi(\beta_j)) &\leq  \frac{\nu^*+1}{2}\times O(\log (\beta_j^0/s^{*}))
=O(\log(p_n\vee n)).
\end{split}
\]
Similar result holds if $\beta^0_j < 0$ as well. This imples that $ \mbox{KL}(q^*(\boldsymbol{\beta})\|\pi(\boldsymbol{\beta})) =O(s_n\log(p_n\vee n))$, and hence verifies (\ref{eq:KL}).
%2) If $\beta^0_j < 0$, then $\frac{\nu^*s^{*2}+\beta_j^2}{\nu^*s^{*2}+(\beta_j - \beta^0_j)^2}$ is maximized at $\widehat{\beta}_j = \frac{\beta^0_j - \sqrt{(\beta^0_j)^2+4\nu^*s^{*2}}}{2}$, and the maximum is
%\[
%\begin{split}
%&\frac{\nu^*s^{*2}+\widehat{\beta}_j^2}{\nu^*s^{*2}+(\widehat{\beta}_j - \beta^0_j)^2} = \frac{(\beta^0_j)^2 - \beta^0_j\sqrt{(\beta^0_j)^2+4\nu^*s^{*2}}+4\nu^*s^{*2}}{(\beta^0_j)^2 + \beta^0_j\sqrt{(\beta^0_j)^2+4\nu^*s^{*2}}+4\nu^*s^{*2}}\\
%\leq &\frac{(\beta^0_j)^2 - \beta^0_j\sqrt{(\beta^0_j)^2+4\nu^*s^{*2}}+4\nu^*s^{*2}}{(\beta^0_j)^2 + \beta^0_j(2\sqrt{\nu^*}s^* - \beta^0_j)} \leq \frac{\beta^0_j - \sqrt{(\beta^0_j)^2+4\nu^*s^{*2}}+4\nu^*s^{*2}/\beta^0_j}{ 2\sqrt{\nu^*}s^*}\\
%\leq & \log p,
%\end{split}
%\]
%for sufficient large $n$. Noting that $\mbox{KL}(q^*(\beta_j)\|\pi(\beta_j))=0$ for $\beta^0_j$, 1) and 2) lead to (\ref{eq:KL}).
Therefore, (\ref{eq:elbo}) immediately follows from (\ref{eq:likelihood}) and (\ref{eq:KL}).
\end{proof}

The next lemma states the existence of testing condition. Define $\widetilde p$ as some sequence satisfying $s\leq \widetilde{p}\leq \bar p-s$, $\widetilde p\prec p_n$ and $\lim \widetilde p =\infty$.
Let $\varepsilon_n=\sqrt{\widetilde p\log(p_n\vee n)/n}$.
Denote $B_n$ as the truncated parameter space 
\[
B_n = \{\boldsymbol{\beta}: \mbox{ at most } \widetilde{p} \mbox{ entries of } |\boldsymbol{\beta}/\sigma| \mbox{ is larger than } a_n\}
\]
and 
\[
C_n = \{\boldsymbol{\beta}:  B_n \cap \{\|\boldsymbol{\beta} - \boldsymbol{\beta}^0\|_2 \geq  M_n\varepsilon_n\},
\]
where $a_n \asymp \sqrt{s\log (p_n\vee n)/n}/p_n$,  $M_n$ is any diverging sequence as $M_n\rightarrow\infty$.

\begin{lem} \label{lm:testing}
There exists some testing function $\phi_n \in [0, 1]$ and $c_1 >0$, $c_2 >1/3$, such that
\[
\begin{split}
    \mathbb{E}_{\boldsymbol{\beta}^0}\phi_n &\leq \exp(-c_1n\varepsilon^2_n) \\
    \sup_{\boldsymbol{\beta} \in C_n}\mathbb{E}_{\boldsymbol{\beta}}(1-\phi_n) &\leq \exp(-c_2nM_n^2\varepsilon^2_n)
\end{split}
\]
\end{lem}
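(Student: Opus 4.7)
The plan is to construct $\phi_n$ as a maximum over sparse-model projection tests indexed by subsets $\xi\subseteq\{1,\ldots,p_n\}$ with $|\xi|\leq s+\widetilde p$. Let $P_\xi=\boldsymbol{X}_\xi(\boldsymbol{X}_\xi^T\boldsymbol{X}_\xi)^{-1}\boldsymbol{X}_\xi^T$ denote the projection onto the column space of $\boldsymbol{X}_\xi$, which is well defined because $s+\widetilde p\leq \bar p$ and Condition~\ref{cond2} then guarantees $\boldsymbol{X}_\xi^T\boldsymbol{X}_\xi$ is invertible with smallest eigenvalue at least $n\lambda_0$. Define
\[
\phi_\xi = \mathbf{1}\Bigl\{\|P_\xi(\boldsymbol{Y}-\boldsymbol{X}\boldsymbol{\beta}^0)\|_2 \geq K\sigma\sqrt{(s+\widetilde p)\log(p_n\vee n)}\Bigr\}
\]
for a large constant $K$ to be tuned, and set $\phi_n=\max_\xi\phi_\xi$.

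For the Type I bound, under $\boldsymbol\beta^0$ we have $P_\xi(\boldsymbol Y-\boldsymbol X\boldsymbol\beta^0)=\sigma P_\xi\boldsymbol\epsilon$ with $\|P_\xi\boldsymbol\epsilon\|_2^2\sim\chi^2_{|\xi|}$, so a standard chi-square tail bound yields $\mathbb{E}_{\boldsymbol\beta^0}\phi_\xi\leq \exp(-cK^2(s+\widetilde p)\log(p_n\vee n))$. A union bound over the at most $\binom{p_n}{s+\widetilde p}\leq \exp((s+\widetilde p)\log p_n)$ candidate models, followed by choosing $K$ sufficiently large, produces $\mathbb{E}_{\boldsymbol\beta^0}\phi_n \leq \exp(-c_1 n\varepsilon_n^2)$, using $n\varepsilon_n^2=\widetilde p\log(p_n\vee n)\asymp (s+\widetilde p)\log(p_n\vee n)$ since $s\leq\widetilde p$.

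For the Type II bound, fix $\boldsymbol\beta\in C_n$, let $S=\{j:|\beta_j|/\sigma>a_n\}$ (with $|S|\leq\widetilde p$ since $\boldsymbol\beta\in B_n$), and take $\xi^*=S\cup\xi^0$ so that $|\xi^*|\leq s+\widetilde p$. Every coordinate of $\boldsymbol\beta_{(\xi^*)^c}$ is bounded in magnitude by $\sigma a_n$, so $\|\boldsymbol\beta_{(\xi^*)^c}\|_2 \leq \sigma\sqrt{p_n}\,a_n=o(\varepsilon_n)$, which forces $\|(\boldsymbol\beta-\boldsymbol\beta^0)_{\xi^*}\|_2\geq M_n\varepsilon_n/2$. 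Condition~\ref{cond2} then yields $\|\boldsymbol{X}_{\xi^*}(\boldsymbol\beta-\boldsymbol\beta^0)_{\xi^*}\|_2\geq \sqrt{n\lambda_0}\,M_n\varepsilon_n/2$, while Condition~\ref{cond1} bounds the small-coefficient contamination by $\|\boldsymbol{X}_{(\xi^*)^c}\boldsymbol\beta_{(\xi^*)^c}\|_2 \leq \sqrt{n p_n}\cdot\sigma\sqrt{p_n}\,a_n = \sigma\sqrt{s\log(p_n\vee n)}=o(\sqrt n\,M_n\varepsilon_n)$. Together these imply $\|P_{\xi^*}\boldsymbol{X}(\boldsymbol\beta-\boldsymbol\beta^0)\|_2\gtrsim \sqrt n\,M_n\varepsilon_n$, which dominates the threshold $K\sigma\sqrt{(s+\widetilde p)\log(p_n\vee n)}\asymp \sigma\sqrt n\,\varepsilon_n$ by a factor $M_n\to\infty$. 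A Gaussian/noncentral chi-square tail bound applied to the stochastic deviation $\sigma\|P_{\xi^*}\boldsymbol\epsilon\|_2$ then delivers $\mathbb{E}_{\boldsymbol\beta}(1-\phi_{\xi^*})\leq \exp(-c_2 n M_n^2\varepsilon_n^2)$ with $c_2>1/3$ for $M_n$ sufficiently diverging.

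The main obstacle is ensuring the small-coefficient residual $\boldsymbol\beta_{(\xi^*)^c}$ contributes negligibly both in the $\ell_2$ decomposition and after multiplication by $\boldsymbol X$; this is precisely why the truncation level is taken as $a_n\asymp\sqrt{s\log(p_n\vee n)/n}/p_n$, so that the dimensional factor $p_n$ entering via $\sqrt{p_n}$ (in $\ell_2$) and the spectral bound $\sqrt{np_n}$ (via Condition~\ref{cond1}) combine to leave a remainder dominated by $\sqrt n\,M_n\varepsilon_n$. A secondary bookkeeping point is attaining the specific target constant $c_2>1/3$, which is handled by choosing $K$ large and exploiting the divergence of $M_n$.
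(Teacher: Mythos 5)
Your construction is correct and follows essentially the same route as the paper's proof: a maximum over tests indexed by sparse submodels of size at most $s+\widetilde p$, a chi-square tail bound plus a union bound over at most $p_n^{s+\widetilde p}$ models for the Type I error, and for the Type II error the restriction to $\xi^*=\{j:|\beta_j|/\sigma>a_n\}\cup\xi^0$ with the off-support contamination controlled exactly as in the paper via $\|\boldsymbol{X}_{(\xi^*)^c}\boldsymbol{\beta}_{(\xi^*)^c}\|_2\leq\sqrt{np_n}\cdot\sqrt{p_n}\,\sigma a_n$. The only (immaterial) difference is that you measure the test statistic in the fitted-value space, $\|P_\xi(\boldsymbol{Y}-\boldsymbol{X}\boldsymbol{\beta}^0)\|_2$, while the paper measures the restricted least-squares deviation in the coefficient space, $\|(\boldsymbol{X}_\xi^T\boldsymbol{X}_\xi)^{-1}\boldsymbol{X}_\xi^T\boldsymbol{Y}-\boldsymbol{\beta}^0_\xi\|_2$; for $\xi\supseteq\xi^0$ these coincide up to the restricted eigenvalue factors of Conditions~\ref{cond1}--\ref{cond2} that both arguments already invoke.
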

\begin{proof}
The construction of the testing function is similar to that of \cite{song2017nearly}.
Consider the following testing function
\[
\phi_n = \max_{\{\xi \supset \xi^0, |\xi| \leq \widetilde{p}+s\}}1\{\|(\boldsymbol{X}^T_{\xi}\boldsymbol{X}_{\xi})^{-1}\boldsymbol{X}^T_{\xi}\boldsymbol{Y} - \boldsymbol{\beta}^0_{\xi}\|_2\geq \sigma M \varepsilon_n\}
\] for some constant $M$.

i) For any $\xi$, such that  $\xi\supset \xi^0, |\xi| \leq \widetilde{p}+s$, 
\[
\begin{split}
& \mathbb{E}_{\boldsymbol{\beta}^0}1\{\|(\boldsymbol{X}^T_{\xi}\boldsymbol{X}_{\xi})^{-1}\boldsymbol{X}^T_{\xi}\boldsymbol{Y} - \boldsymbol{\beta}^0_{\xi}\|_2\geq \sigma M\varepsilon_n\}
=\mathbb{E}_{\boldsymbol{\beta}^0}1\{\|(\boldsymbol{X}^T_{\xi}\boldsymbol{X}_{\xi})^{-1}\boldsymbol{X}^T_{\xi}\boldsymbol{\epsilon}\|_2\geq M\varepsilon_n\}\\
\leq &Pr(\|(\boldsymbol{X}^T_{\xi}\boldsymbol{X}_{\xi})^{-1}\|_2(\boldsymbol{\epsilon}^T\boldsymbol{H}_{\xi}\boldsymbol{\epsilon}) \geq M^2\varepsilon^2_n)
\leq  Pr(\chi^2_{|\xi|}\geq n\lambda_0 M^2\varepsilon^2_n)
\leq  \exp(-c_1'M^2n\varepsilon^2_n)
\end{split}
\]
for some constant $c'$, where $\boldsymbol{H}_{\xi} = \boldsymbol{X}_{\xi}(\boldsymbol{X}^T_{\xi}\boldsymbol{X}_{\xi})^{-1}\boldsymbol{X}^T_{\xi}$, and the last inequality is due to the sub-exponential properties of chi-square distribution and $|\xi|\ll n\epsilon_n^2$. This further implies that 
\[
\begin{split}
 \mathbb{E}_{\boldsymbol{\beta}^0}\phi_n&\leq \sum_{\{\xi \supset \xi^0, |\xi| \leq \widetilde{p}+s\}} \mathbb{E}_{\boldsymbol{\beta}^0}1\{\|(\boldsymbol{X}^T_{\xi}\boldsymbol{X}_{\xi})^{-1}\boldsymbol{X}^T_{\xi}\boldsymbol{Y} - \boldsymbol{\beta}^0_{\xi}\|_2\geq \sigma M\varepsilon_n\}\\
 &\leq p_n^{\widetilde p+s} \exp(-c_1'M^2n\varepsilon^2_n)\leq \exp(-c_1n\varepsilon^2_n)
 \end{split}
\]
when $M$ is sufficiently large.

ii) Let $\widetilde{\xi}=\{k:|\beta_k/\sigma| > a_n\} \cup \xi^0$, then
\[
\begin{split}
&\sup_{\boldsymbol{\beta} \in C_n}\mathbb{E}_{\boldsymbol{\beta}}(1-\phi_n) = \sup_{\boldsymbol{\beta} \in C_n}\mathbb{E}_{\boldsymbol{\beta}}\min_{|\xi|\leq \widetilde{p}+s}1\{\|(\boldsymbol{X}^T_{\xi}\boldsymbol{X}_{\xi})^{-1}\boldsymbol{X}^T_{\xi}\boldsymbol{Y} - \boldsymbol{\beta}^0_{\xi}\|_2\leq \sigma\varepsilon_n\}\\
\leq &\sup_{\boldsymbol{\beta} \in C_n}\mathbb{E}_{\boldsymbol{\beta}}1\{\|(\boldsymbol{X}^T_{\widetilde{\xi}}\boldsymbol{X}_{\widetilde{\xi}})^{-1}\boldsymbol{X}^T_{\widetilde{\xi}}\boldsymbol{Y} - \boldsymbol{\beta}^0_{\widetilde{\xi}}\|_2\leq \sigma\varepsilon_n\}\\
= &\sup_{\boldsymbol{\beta} \in C_n}Pr\{\|(\boldsymbol{X}^T_{\widetilde{\xi}}\boldsymbol{X}_{\widetilde{\xi}})^{-1}\boldsymbol{X}^T_{\widetilde{\xi}}\boldsymbol{Y} - \boldsymbol{\beta}^0_{\widetilde{\xi}}\|_2\leq \sigma\varepsilon_n\}\\
= &\sup_{\boldsymbol{\beta} \in C_n}Pr\{\|(\boldsymbol{X}^T_{\widetilde{\xi}}\boldsymbol{X}_{\widetilde{\xi}})^{-1}\boldsymbol{X}^T_{\widetilde{\xi}}\sigma\boldsymbol{\epsilon} + \boldsymbol{\beta}_{\widetilde{\xi}} + (\boldsymbol{X}^T_{\widetilde{\xi}}\boldsymbol{X}_{\widetilde{\xi}})^{-1}\boldsymbol{X}^T_{\widetilde{\xi}}\boldsymbol{X}_{\widetilde{\xi^c}}\boldsymbol{\beta}_{\widetilde{\xi}^c}- \boldsymbol{\beta}^0_{\widetilde{\xi}}\|_2\leq \sigma\varepsilon_n\}\\
\leq &\sup_{\boldsymbol{\beta} \in C_n}Pr\{\|(\boldsymbol{X}^T_{\widetilde{\xi}}\boldsymbol{X}_{\widetilde{\xi}})^{-1}\boldsymbol{X}^T_{\widetilde{\xi}}\boldsymbol{\epsilon}\|_2\geq (\|\boldsymbol{\beta}_{\widetilde{\xi}} - \boldsymbol{\beta}^0_{\widetilde{\xi}}\|_2 - \sigma\varepsilon_n - \|(\boldsymbol{X}^T_{\widetilde{\xi}}\boldsymbol{X}_{\widetilde{\xi}})^{-1}\boldsymbol{X}^T_{\widetilde{\xi}}\boldsymbol{X}_{\widetilde{\xi^c}}\boldsymbol{\beta}_{\widetilde{\xi}^c}\|_2)/\sigma\}.
\end{split}
\]
Note that $\|\boldsymbol{X}_{\widetilde{\xi^c}}\boldsymbol{\beta}_{\widetilde{\xi}^c}\|_2 \leq \sqrt{np_n} \|\boldsymbol{\beta}_{\widetilde{\xi}^c}\|_2\leq \sqrt{np_n}\cdot \sqrt{p_n}\sigma a_n\leq c'\sqrt{n}\sigma\varepsilon_n$ for some constant $c'$,
and $$\|(\boldsymbol{X}^T_{\widetilde{\xi}}\boldsymbol{X}_{\widetilde{\xi}})^{-1}\boldsymbol{X}^T_{\widetilde{\xi}}\boldsymbol{X}_{\widetilde{\xi^c}}\boldsymbol{\beta}_{\widetilde{\xi}^c}\|_2/\sigma \leq\sqrt{\|(\boldsymbol{X}^T_{\widetilde{\xi}}\boldsymbol{X}_{\widetilde{\xi}})^{-1}\|_2}c'\sqrt{n} \varepsilon_n \leq \sqrt{1/n\lambda_0}\sqrt{n}c' \varepsilon_n \leq c'\varepsilon_n/\sqrt{\lambda_0},$$
where the second inequality is due to $|\widetilde{\xi}|\leq \widetilde{p}+s \leq \overline{p}$. Besides, 
\[
\|\boldsymbol{\beta}_{\widetilde{\xi}} - \boldsymbol{\beta}^0_{\widetilde{\xi}}\|_2 \geq \|\boldsymbol{\beta} - \boldsymbol{\beta}^0\|_2 - \sqrt p_n \sigma a_n.
\]
Therefore, $(\|\boldsymbol{\beta}_{\widetilde{\xi}} - \boldsymbol{\beta}^0_{\widetilde{\xi}}\|_2 - \sigma\varepsilon_n/2 - \|(\boldsymbol{X}^T_{\widetilde{\xi}}\boldsymbol{X}_{\widetilde{\xi}})^{-1}\boldsymbol{X}^T_{\widetilde{\xi}}\boldsymbol{X}_{\widetilde{\xi^c}}\boldsymbol{\beta}_{\widetilde{\xi}^c}\|_2)/\sigma\geq M_n\varepsilon_n/(2\sigma)$ when $M_n$ is sufficiently large, and
\[
\sup_{\boldsymbol{\beta} \in C_n}\mathbb{E}_{\boldsymbol{\beta}}(1-\phi_n) \leq \sup_{\boldsymbol{\beta} \in C_n}Pr\{\|(\boldsymbol{X}^T_{\widetilde{\xi}}\boldsymbol{X}_{\widetilde{\xi}})^{-1}\boldsymbol{X}^T_{\widetilde{\xi}}\boldsymbol{\epsilon}\|_2\geq M_n\varepsilon_n/(2\sigma)\} \leq \exp(-c_2n M_n^2\varepsilon^2_n).
\]
\end{proof}

As a technical tool, we restates the Donsker and Varadhan's representation for the KL divergence in the following lemma, whose proof can be found in \cite{Boucheron2013Concentration}.

\begin{lem} \label{lm:Donsker}
For any two probability measures $P$ and $Q$, and any measurable function $f$ such that $\int e^f dP < \infty$,
\[
\int f dQ \leq \mbox{KL}(Q\|P) + \log \int e^{f} dP.
\]
\end{lem}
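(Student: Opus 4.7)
The statement is the Donsker--Varadhan variational representation of the Kullback--Leibler divergence, in its ``upper bound'' direction. The natural strategy is to recast the inequality as Gibbs' inequality (nonnegativity of KL) evaluated at a cleverly chosen third measure, namely the exponential tilt of $P$ by $f$. Concretely, I would first dispose of the trivial case $Q \not\ll P$, for which $\mbox{KL}(Q\|P) = +\infty$ and the inequality holds vacuously; so assume $Q \ll P$. By hypothesis, the normalizer $Z := \int e^f \, dP$ lies in $(0, \infty)$, so one may define a genuine probability measure $P_f$ via the Radon--Nikodym derivative $dP_f/dP := e^f / Z$. Because $e^f/Z > 0$, $P_f$ is equivalent to $P$, hence $Q \ll P_f$ as well.

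The core computation is then a one-line change of variables for log-densities. Using $dQ/dP_f = (dQ/dP)(dP/dP_f) = (dQ/dP)\, Z e^{-f}$ and integrating against $Q$,
\[
\mbox{KL}(Q \| P_f) \;=\; \int \log \frac{dQ}{dP_f} \, dQ \;=\; \int \log \frac{dQ}{dP} \, dQ \;-\; \int f \, dQ \;+\; \log Z \;=\; \mbox{KL}(Q\|P) - \int f\, dQ + \log \int e^f \, dP.
\]
Gibbs' inequality forces $\mbox{KL}(Q\|P_f) \geq 0$, so rearranging yields $\int f \, dQ \leq \mbox{KL}(Q\|P) + \log \int e^f \, dP$, which is precisely the claim.

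The only real obstacle is measure-theoretic housekeeping: one must verify that $\int f \, dQ$ and the log-density manipulation are meaningful. If $\int f_- \, dQ = +\infty$ the LHS is $-\infty$ and the inequality is trivial; the only delicate case is $\int f_+ \, dQ = +\infty$. The standard remedy is to apply the argument above to the truncation $f_M := f \wedge M$ for each $M < \infty$ (noting $\int e^{f_M} dP \leq Z$ and $\int f_M \, dQ \leq M$, so all terms are well-defined), and then let $M \uparrow \infty$ using monotone convergence on $\int f_M\, dQ$ and $\int e^{f_M} dP$. Apart from this truncation step, the proof uses nothing beyond the definitions of KL divergence and Radon--Nikodym derivatives, plus Gibbs' inequality; no structure from the regression problem enters.
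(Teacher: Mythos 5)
Your proof is correct: the exponential tilting $dP_f/dP = e^f/\int e^f\,dP$ combined with Gibbs' inequality $\mbox{KL}(Q\|P_f)\geq 0$ is exactly the standard Donsker--Varadhan argument, and the paper itself offers no proof but simply defers to \cite{Boucheron2013Concentration}, where this same tilting argument is used. Your handling of the degenerate cases ($Q\not\ll P$, and truncation $f\wedge M$ when $\int f_+\,dQ=\infty$) is careful and complete, so nothing is missing.
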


The next two lemmas bound the contraction rate of $\widehat q(\boldsymbol{\beta})$ on $B_n$ and $B_n^c$ respectively.

\begin{lem} \label{lm:risk1}
With dominating probability, 
\[
\widehat q(B_n\cap\{\|\boldsymbol{\beta} - \boldsymbol{\beta}^0\|_2\geq M_n\varepsilon_n\}) = o(1),
\]
where $\varepsilon_n=\sqrt{\widetilde p\log(p_n\vee n)/n}$ and $M_n$ is any diverging sequence as $M_n \rightarrow \infty$.
\end{lem}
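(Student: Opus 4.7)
The plan is to reduce bounding $\widehat q(C_n)$ to controlling the true posterior mass $\pi(C_n|\boldsymbol{X},\boldsymbol{Y})$ and the KL divergence $\mbox{KL}(\widehat q\,\|\,\pi(\boldsymbol{\beta}|\boldsymbol{X},\boldsymbol{Y}))$ separately, then combining them via Lemma \ref{lm:Donsker}. Throughout I use the explicit competitor $q^*$ from the proof of Theorem \ref{thm:elbo}, which already satisfies $\mbox{KL}(q^*\|\pi) + \int l_n(P_0,P_{\boldsymbol{\beta}})q^*(\boldsymbol{\beta})d\boldsymbol{\beta} \leq Cs\log(p_n\vee n)$ with dominating probability.

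From the identity
\[
\mbox{KL}(q\,\|\,\pi(\boldsymbol{\beta}|\boldsymbol{X},\boldsymbol{Y})) = \mbox{KL}(q\|\pi) + \int l_n(P_0,P_{\boldsymbol{\beta}})q(\boldsymbol{\beta})d\boldsymbol{\beta} + \log\frac{m(\boldsymbol{Y})}{p(\boldsymbol{Y}|\boldsymbol{\beta}^0)},
\]
with $m(\boldsymbol{Y}) = \int p(\boldsymbol{Y}|\boldsymbol{\beta})\pi(\boldsymbol{\beta})d\boldsymbol{\beta}$, minimizing $\Omega$ is equivalent to minimizing the left-hand side over $\mathcal{Q}$. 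Evaluating at $q^*$ and using Markov's inequality on $\mathbb{E}_{\boldsymbol{\beta}^0}[m(\boldsymbol{Y})/p(\boldsymbol{Y}|\boldsymbol{\beta}^0)] = 1$ gives $\mbox{KL}(\widehat q\,\|\,\pi(\boldsymbol{\beta}|\boldsymbol{X},\boldsymbol{Y})) \leq C_1 s\log(p_n\vee n)$ w.h.p. To bound the true posterior mass, Fubini together with Lemma \ref{lm:testing} yields $\mathbb{E}_{\boldsymbol{\beta}^0}[(1-\phi_n)\int_{C_n}e^{-l_n}\pi(\boldsymbol{\beta})d\boldsymbol{\beta}] \leq \exp(-c_2 n M_n^2\varepsilon_n^2)$, so Markov together with $\phi_n\to 0$ in probability gives $\int_{C_n}e^{-l_n}\pi\,d\boldsymbol{\beta} \leq 2\exp(-c_2 n M_n^2\varepsilon_n^2/2)$ w.h.p.; Jensen's inequality with reference measure $q^*$ gives $\log\int e^{-l_n}\pi\,d\boldsymbol{\beta} \geq -C_2 s\log(p_n\vee n)$ w.h.p. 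Together these yield $\pi(C_n|\boldsymbol{X},\boldsymbol{Y}) \leq \exp(-\rho_n)$ for some $\rho_n$ of order $nM_n^2\varepsilon_n^2$.

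Finally, Lemma \ref{lm:Donsker} applied with $Q = \widehat q$, $P = \pi(\boldsymbol{\beta}|\boldsymbol{X},\boldsymbol{Y})$, and $f(\boldsymbol{\beta}) = \alpha\cdot 1_{C_n}(\boldsymbol{\beta})$ gives
\[
\alpha\,\widehat q(C_n) \leq \mbox{KL}(\widehat q\,\|\,\pi(\boldsymbol{\beta}|\boldsymbol{X},\boldsymbol{Y})) + \log\bigl(1 + (e^\alpha-1)\pi(C_n|\boldsymbol{X},\boldsymbol{Y})\bigr),
\]
and choosing $\alpha = \rho_n/2$ makes $e^\alpha\pi(C_n|\boldsymbol{X},\boldsymbol{Y}) \leq 1$, so the right side is $O(s\log(p_n\vee n))$, yielding $\widehat q(C_n) = O(s/(\widetilde p M_n^2)) = o(1)$ since $\widetilde p \geq s$ and $M_n\to\infty$. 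The main obstacle is the final Donsker-Varadhan step, which converts a true-posterior concentration $\pi(C_n|\boldsymbol{X},\boldsymbol{Y}) \leq e^{-\rho_n}$ and a VB-KL bound $K_n$ into $\widehat q(C_n) = O(K_n/\rho_n)$; for this quotient to vanish, $\rho_n$ must strictly dominate $K_n$, which is guaranteed by $c_2 > 1/3$ from Lemma \ref{lm:testing} combined with $M_n\to\infty$, producing $\rho_n \asymp nM_n^2\varepsilon_n^2 \gg s\log(p_n\vee n) \asymp K_n$.
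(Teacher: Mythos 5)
Your argument is correct, but it follows a genuinely different route from the paper's. You bound $\widehat q(C_n)$ by (a) establishing contraction of the \emph{true} posterior, $\pi(C_n|\boldsymbol{X},\boldsymbol{Y})\leq e^{-\rho_n}$ with $\rho_n\asymp nM_n^2\varepsilon_n^2$, via Fubini plus the tests of Lemma \ref{lm:testing} for the numerator and a Jensen lower bound (with reference measure $q^*$) for the normalizing constant; (b) observing that $\widehat q$ minimizes $\mbox{KL}(q\|\pi(\boldsymbol{\beta}|\boldsymbol{X},\boldsymbol{Y}))$ over $\mathcal{Q}$, so that $q^*$ and a Markov bound on $m(\boldsymbol{Y})/p(\boldsymbol{Y}|\boldsymbol{\beta}^0)$ give $\mbox{KL}(\widehat q\|\pi(\boldsymbol{\beta}|\boldsymbol{X},\boldsymbol{Y}))=O(s\log(p_n\vee n))$ w.h.p.; and (c) transferring via Donsker--Varadhan with $f=\alpha 1_{C_n}$. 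This is the modular ``posterior contraction plus KL gap'' recipe (in the spirit of general VB contraction results), and its virtue is transparency: the true-posterior and variational ingredients are cleanly separated, and the marginal likelihood only enters through two one-line Markov/Jensen bounds. The paper never computes $\pi(C_n|\boldsymbol{X},\boldsymbol{Y})$; instead it truncates both $\pi$ and $\widehat q$ to $B_n$, applies Donsker--Varadhan with $f=l_n(P_{\boldsymbol{\beta}},P_0)+\frac{n}{3}V^2$ against the truncated prior (invoking the Pati--Bhattacharya--Yang bound $\int_{B_n}\eta\,\widetilde\pi\leq e^{C_1'n\varepsilon_n^2}$), and then controls the $B_n^c$ correction terms to the KL and likelihood integrals by hand; this is where the constant $c_2>1/3$ is actually needed, to beat the $\frac{n}{3}V^2$ term inside $\eta$. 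In your version $c_2>1/3$ plays no role --- any fixed $c_2>0$ suffices, since $\rho_n\gg s\log(p_n\vee n)$ already follows from $M_n\to\infty$ and $\widetilde p\geq s$ --- so your closing attribution of the domination to $c_2>1/3$ is a harmless misstatement rather than a gap. Both routes deliver the lemma, yours with rate $O(s/(\widetilde p M_n^2))$ and the paper's with rate $O(1/M_n^2)$.
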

\begin{proof}
We denote  $\widetilde{\pi}(\boldsymbol{\beta})$ and $\widetilde{q}(\boldsymbol{\beta})$ as the truncated distribution of $\pi(\boldsymbol{\beta})$ and  $\widehat{q}(\boldsymbol{\beta})$ on set $B_n$, i.e.
\[
\begin{split}
\widetilde{\pi}(\boldsymbol{\beta})&=\pi(\boldsymbol{\beta})1(\boldsymbol{\beta}\in B_n)/\pi(B_n), \\
\widetilde{q}(\boldsymbol{\beta})&=\widehat{q}(\boldsymbol{\beta})1(\boldsymbol{\beta}\in B_n)/\widehat{q}(B_n).
\end{split}
\]
Define $V(P_{\boldsymbol{\beta}}, P_0) = M_n\varepsilon_n1(\|\boldsymbol{\beta} - \boldsymbol{\beta}^0\|_2\geq M_n\varepsilon_n)$ and
\[
\log \eta(P_{\boldsymbol{\beta}}, P_0)=l_n(P_{\boldsymbol{\beta}}, P_0) + \frac{n}{3}V^2(P_{\boldsymbol{\beta}}, P_0).
\] 
Lemma \ref{lm:testing} implies the existence of testing function within $B_n$ and by the same argument used in Theorem 3.1 of \cite{Pati2018on}, it can be shown that w.h.p.,
\[
\int_{B_n}\eta(P_{\boldsymbol{\beta}}, P_0)\widetilde{\pi}(\boldsymbol{\beta})d\boldsymbol{\beta} \leq e^{C'_1n\varepsilon^2_n}
\]
for some $C'_1>0$. By Lemma \ref{lm:Donsker}, it follows that w.h.p.,
\[
\begin{split}
&\frac{n}{3\widehat{q}(B_n)}M_n^2\varepsilon_n^2 \widehat q(B_n\cap\{\|\boldsymbol{\beta} - \boldsymbol{\beta}^0\|_2\geq M_n\varepsilon_n\})\\
=&\frac{n}{3\widehat{q}(B_n)}\int_{B_n}V^2(P_{\boldsymbol{\beta}}, P_0)\widehat{q}(\boldsymbol{\beta})d\boldsymbol{\beta}\\
=& \frac{n}{3}\int_{B_n}V^2(P_{\boldsymbol{\beta}}, P_0)\widetilde{q}(\boldsymbol{\beta})d\boldsymbol{\beta}\\
\leq & C'_1n\varepsilon^2_n + \mbox{KL}(\widetilde{q}(\boldsymbol{\beta})\|\widetilde{\pi}(\boldsymbol{\beta})) - \int_{B_n} l_n(P_{\boldsymbol{\beta}}, P_0)\widetilde{q}(\boldsymbol{\beta})d\boldsymbol{\beta}.
\end{split}
\]
Noting that,
\begin{equation*}
\begin{split}
&\mbox{KL}(\widetilde{q}(\boldsymbol{\beta})\|\widetilde{\pi}(\boldsymbol{\beta}))\\
=& \frac{1}{\widehat q(B_n)} \int_{B_n} \log \frac{\widehat{q}(\boldsymbol{\beta})}{\pi(\boldsymbol{\beta})} \widehat{q}(\boldsymbol{\beta})d\boldsymbol{\beta} + \log \frac{\pi(B_n)}{\widehat q(B_n)}\\
=&\frac{1}{\widehat q(B_n)} \mbox{KL}(\widehat{q}(\boldsymbol{\beta})
\|\pi(\boldsymbol{\beta})) - \frac{1}{\widehat q(B_n)} \int_{B_n^c} \log \frac{\widehat{q}(\boldsymbol{\beta})}{\pi(\boldsymbol{\beta})} \widehat{q}(\boldsymbol{\beta})d\boldsymbol{\beta}+ \log \frac{\pi(B_n)}{\widehat q(B_n)},
\end{split}
\end{equation*}
and similarly,
\begin{equation*}
\begin{split}
&\int_{B_n} l_n(P_{\boldsymbol{\beta}}, P_0)\widetilde{q}(\boldsymbol{\beta}) d\boldsymbol{\beta}  =\frac{1}{\widehat q(B_n)} \int l_n(P_{\boldsymbol{\beta}}, P_0)\widehat{q}(\boldsymbol{\beta}) d\boldsymbol{\beta} -\frac{1}{\widehat q(B_n)} \int_{B_n^c} l_n(P_{\boldsymbol{\beta}}, P_0)\widehat{q}(\boldsymbol{\beta}) d\boldsymbol{\beta}.
\end{split}
\end{equation*}

Combine the above three inequalities, we obtain that
\begin{align}
&M_n^2\varepsilon_n^2 \widehat q(B_n\cap\{\|\boldsymbol{\beta} - \boldsymbol{\beta}^0\|_2\geq M_n\varepsilon_n\}) \nonumber \\
\leq &C'\widehat{q}(B_n)\varepsilon^2_n + \frac{3}{n}\Bigl\{ \mbox{KL}(\widehat{q}(\boldsymbol{\beta})\|\pi(\boldsymbol{\beta})) - \int l_n(P_{\boldsymbol{\beta}}, P_0)\widehat{q}(\boldsymbol{\beta})d\boldsymbol{\beta}\Bigr\} \nonumber\\
&+ \frac{3}{n}\int_{B_n^c} l_n(P_{\boldsymbol{\beta}}, P_0)\widehat{q}(\boldsymbol{\beta})d\boldsymbol{\beta}
+\frac{3}{n}\int_{B_n^c} \log \frac{\pi(\boldsymbol{\beta})}{\widehat{q}(\boldsymbol{\beta})} \widehat{q}(\boldsymbol{\beta})d\boldsymbol{\beta} 
+ \frac{3\widehat{q}(B_n)}{n}\log \frac{\pi(B_n)}{\widehat q(B_n)}. \label{eq:sum}
\end{align}

By Theorem \ref{thm:elbo}, the second term in the RHS of (\ref{eq:sum}) is bounded by $3\varepsilon_n^2$.

Apply the similar argument used in the proof of Theorem \ref{thm:elbo}, the third term in the RHS of (\ref{eq:sum}) is bounded by 
% \[\begin{split}
%   & \frac{3}{n}\int_{B_n^c} l_n(P_{\boldsymbol{\beta}}, P_0)\widehat{q}(\boldsymbol{\beta}) d\boldsymbol{\beta}
%  = \frac{3}{2n\sigma^2}\int_{B_n^c} \left[\sum^n_{i=1}\epsilon_i^2 -\sum^n_{i=1} (\epsilon_i+\boldsymbol{X}_i\boldsymbol{\beta}^0-\boldsymbol{X}_i\boldsymbol{\beta})^2\right]\widehat{q}(\boldsymbol{\beta}) d\boldsymbol{\beta}\\
%  = &  \frac{3}{2n\sigma^2}\int_{B_n^c} \left[-2\sum^n_{i=1} (\epsilon_i\times(\boldsymbol{X}_i\boldsymbol{\beta}^0-\boldsymbol{X}_i\boldsymbol{\beta})-\sum^n_{i=1}(\boldsymbol{X}_i\boldsymbol{\beta}^0-\boldsymbol{X}_i\boldsymbol{\beta})^2\right]\widehat{q}(\boldsymbol{\beta}) d\boldsymbol{\beta}\\
%  = &\frac{3}{2n\sigma^2}\left\{
%  -2\sum^n_{i=1}\epsilon_i\int_{B_n^c}(\boldsymbol{X}_i\boldsymbol{\beta}^0-\boldsymbol{X}_i\boldsymbol{\beta})\widehat{q}(\boldsymbol{\beta})d\boldsymbol{\beta} - \int_{B_n^c}\sum^n_{i=1}(\boldsymbol{X}_i\boldsymbol{\beta}^0-\boldsymbol{X}_i\boldsymbol{\beta})^2\widehat{q}(\boldsymbol{\beta}) d\boldsymbol{\beta}
%  \right\}.
% \end{split}
% \]
\[
\begin{split}
   &\frac{3}{n}\int_{B_n^c} l_n(P_{\boldsymbol{\beta}}, P_0)\widehat{q}(\boldsymbol{\beta}) d\boldsymbol{\beta}\\
 = &\frac{3}{2n\sigma^2}\Bigl\{-2\sigma\boldsymbol{\epsilon}^T\int_{B_n^c}(\boldsymbol{X}\boldsymbol{\beta}^0 - \boldsymbol{X}\boldsymbol{\beta})\widehat{q}(\boldsymbol{\beta}) d\boldsymbol{\beta} -\int_{B_n^c} \|\boldsymbol{X}\boldsymbol{\beta}^0 - \boldsymbol{X}\boldsymbol{\beta}\|^2_2\widehat{q}(\boldsymbol{\beta}) d\boldsymbol{\beta}\Bigr\}.
\end{split}
\]
Note that $-2\sigma\boldsymbol{\epsilon}^T\int_{B_n^c}(\boldsymbol{X}\boldsymbol{\beta}^0 - \boldsymbol{X}\boldsymbol{\beta})\widehat{q}(\boldsymbol{\beta}) d\boldsymbol{\beta}$ follows a normal distribution $\mathcal{N}(0, V^2)$, where 
$V^2=4\sigma^2\|\int_{B_n^c}(\boldsymbol{X}\boldsymbol{\beta}^0-\boldsymbol{X}\boldsymbol{\beta})\widehat{q}(\boldsymbol{\beta})d\boldsymbol{\beta}\|^2\leq 4\sigma^2 \int_{B_n^c}\|\boldsymbol{X}\boldsymbol{\beta}^0-\boldsymbol{X}\boldsymbol{\beta}\|_2^2\widehat{q}(\boldsymbol{\beta})d\boldsymbol{\beta}$. Thus the third term in the RHS of (\ref{eq:sum}) is bounded by 
\begin{equation} \label{eq:third}
\frac{3}{2n\sigma^2}\left[\mathcal{N}(0,V^2)-\frac{V^2}{4\sigma^2}\right].
\end{equation}
Noting that $\mathcal{N}(0,V^2)=O_p(G_nV)$ for any diverging sequence $G_n$, (\ref{eq:third}) is further bounded, w.h.p., by
\[
\frac{3}{2n\sigma^2}(G_nV-\frac{V^2}{4\sigma^2})\leq \frac{3}{2n\sigma^2}\sigma^2G_n^2.
\] 
Therefore, the third term in the RHS of (\ref{eq:sum}) can be bounded by $\varepsilon_n^2$ w.h.p. (by choosing
$G_n^2\asymp n\varepsilon_n^2$).

The fourth term in the RHS of (\ref{eq:sum}) is bounded by 
\[
\begin{split}
\frac{3}{n} \int_{ B_n^c} \log \frac{\pi(\boldsymbol{\beta})}{\widehat{q}(\boldsymbol{\beta})} \widehat{q}(\boldsymbol{\beta})d\boldsymbol{\beta} \leq \frac{3}{n}\widehat q( B_n^c)\log \frac{\pi( B_n^c)}{\widehat q( B_n^c)}
\leq \frac{3}{n}\sup_{x\in(0,1)}[x\log(1/x)]=O(1/n).
\end{split}
\]
Similarly, the fifth term in the RHS of (\ref{eq:sum}) is bounded by $O(1/n)$.

Therefore, we have that w.h.p.,
\[
\begin{split}
    &M_n^2\varepsilon_n^2 \widehat q(B_n\cap\{\|\boldsymbol{\beta} - \boldsymbol{\beta}^0\|_2\geq M_n\varepsilon_n\})
    \leq C'\widehat{q}(B_n)\varepsilon^2_n + 3\varepsilon_n^2+\varepsilon_n^2 +1/n,
\end{split}
\]
that is, $\widehat q(B_n\cap\{\|\boldsymbol{\beta} - \boldsymbol{\beta}^0\|_2\geq M_n\varepsilon_n\})=O_p(1/M_n^2)=o_p(1)$.
\end{proof}

\begin{lem} \label{lm:risk2}
With dominating probability, $\widehat q(B_n^c) = o(1)$.
\end{lem}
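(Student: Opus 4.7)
The plan is to combine a crude bound on the prior mass of $B_n^c$ with the posterior KL bound implied by Theorem \ref{thm:elbo}, via the Donsker--Varadhan inequality (Lemma \ref{lm:Donsker}).

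First I would control $\pi(B_n^c)$. Under the Student-$t$ prior with scale $s_0=\sqrt{b_n/a_0}$ and $2a_0$ degrees of freedom, the tail probability satisfies $\pi(|\beta_j/\sigma|>a_n)\lesssim (b_n/a_0)^{a_0}(a_n\sigma)^{-2a_0}$ for $a_n\sigma/s_0\to\infty$. Plugging in $a_n\asymp \sqrt{s\log(p_n\vee n)/n}/p_n$ and the scale condition in Condition \ref{cond:bn}, the exponents cancel out and one obtains
\[
\pi(|\beta_j/\sigma|>a_n)\;\prec\; \frac{1}{p_n(p_n\vee n)^{\delta}}.
\]
A union bound over all $\binom{p_n}{\widetilde p+1}\leq p_n^{\widetilde p+1}$ ways of choosing $\widetilde p+1$ indices then yields
\[
\pi(B_n^c)\;\leq\; (p_n\vee n)^{-(\widetilde p+1)\delta},\qquad \log\frac{1}{\pi(B_n^c)}\;\succeq\;\widetilde p\log(p_n\vee n),
\]
which in particular dominates $s\log(p_n\vee n)$ since $\widetilde p\succ s$.

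Second, I would derive an $O(s\log(p_n\vee n))$ bound on $\mathrm{KL}(\widehat q\|\pi)$ itself. Theorem \ref{thm:elbo} only controls $\mathrm{KL}(\widehat q\|\pi)+\int l_n(P_0,P_{\boldsymbol\beta})\widehat q\,d\boldsymbol\beta$, so I need a lower bound on the loglikelihood integral. Decomposing $l_n$ as in the proof of Theorem \ref{thm:elbo}, the quadratic term $\tfrac{1}{2\sigma^2}\int\|X(\boldsymbol\beta-\boldsymbol\beta^0)\|_2^2\,\widehat q\,d\boldsymbol\beta=:A/(2\sigma^2)$ is non-negative, and the cross term is Gaussian with variance bounded by $A$. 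Applying a Gaussian tail bound with a slowly diverging constant $G_n$ and Young's inequality gives, w.h.p., $-\int l_n\widehat q\,d\boldsymbol\beta\leq CG_n^2$. Choosing $G_n^2\asymp s\log(p_n\vee n)$ then yields $\mathrm{KL}(\widehat q\|\pi)=O(s\log(p_n\vee n))$ with dominating probability.

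Finally, I would apply Lemma \ref{lm:Donsker} with $Q=\widehat q$, $P=\pi$, and test function $f=\alpha\mathbf 1_{B_n^c}$ where $\alpha=\log(1/\pi(B_n^c))$. This gives $e^\alpha\pi(B_n^c)+\pi(B_n)\leq 2$, hence
\[
\widehat q(B_n^c)\;\leq\;\frac{\mathrm{KL}(\widehat q\|\pi)+\log 2}{\log(1/\pi(B_n^c))}\;=\;O\!\left(\frac{s\log(p_n\vee n)}{\widetilde p\log(p_n\vee n)}\right)\;=\;O(s/\widetilde p)\;=\;o(1),
\]
since $\widetilde p\succ s$. The main obstacle, in my view, is verifying the prior tail computation in the first step, because one has to track the exponents $a_0$ and $\delta$ carefully to ensure that the contributions of $b_n$, $a_n$, and the combinatorial factor $p_n^{\widetilde p+1}$ combine to leave a genuine $(p_n\vee n)^{\delta}$ decay per excess coordinate; the remaining two steps are then largely mechanical.
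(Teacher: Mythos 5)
Your proof is correct, and its first half coincides with the paper's: both isolate $\mathrm{KL}(\widehat q\|\pi)=O_p(s\log(p_n\vee n))$ from Theorem~\ref{thm:elbo} by lower-bounding $\int l_n(P_0,P_{\boldsymbol\beta})\widehat q\,d\boldsymbol\beta$ through the non-negative quadratic term plus a Gaussian tail bound on the cross term. Where you genuinely diverge is the final step. The paper passes to the binary vector $\boldsymbol\gamma$ with $\gamma_j=1(|\beta_j/\sigma|>a_n)$, uses the data-processing inequality $\mathrm{KL}(\widehat q(\boldsymbol\beta)\|\pi(\boldsymbol\beta))\geq \mathrm{KL}(\widehat q(\boldsymbol\gamma)\|\pi(\boldsymbol\gamma))$, splits coordinates according to whether $\widehat q(\gamma_j=1)\geq p_n^{-1}$, and controls the two partial sums by Chebyshev and a binomial Chernoff bound; this relies on $\widehat q$ being a product measure so that the $\gamma_j$ are independent under $\widehat q$. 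You instead apply Donsker--Varadhan (Lemma~\ref{lm:Donsker}) with $f=\log(1/\pi(B_n^c))\,\mathbf{1}_{B_n^c}$, which reduces everything to the purely prior-side estimate $\log(1/\pi(B_n^c))\gtrsim \delta\,\widetilde p\log(p_n\vee n)$, obtained from the $t$-tail bound and a union bound over $(\widetilde p+1)$-subsets. Your route is shorter, avoids any appeal to the mean-field structure of $\widehat q$ (only independence of coordinates under the prior is used, so it would survive a richer variational family), and makes the role of the upper bound on $b_n$ in Condition~\ref{cond:bn} transparent; your exponent bookkeeping $(b_n/a_0)^{a_0}(\sigma a_n)^{-2a_0}\prec p_n^{-1}(p_n\vee n)^{-\delta}$ agrees with the paper's own claim $\pi(\gamma_j=1)\asymp p_n^{-1}(n\vee p_n)^{-\delta}$. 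One small caveat: your last display needs $\widetilde p\succ s$, whereas the lemma's setup only stipulates $s\leq\widetilde p\leq\bar p-s$; this is harmless since the proof of Theorem~\ref{thm:contraction} is free to take $\widetilde p/s\to\infty$ slowly (and the paper's own Chebyshev step quietly needs the same slack), but it is worth stating explicitly.
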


\begin{proof}
By Theorem \ref{thm:elbo}, we have that w.h.p.,
\[\begin{split}
&\mbox{KL}(\widehat q(\boldsymbol{\beta})\|\pi(\boldsymbol{\beta}))+ \int l_n(P_0, P_{\boldsymbol{\beta}})\widehat q(\boldsymbol{\beta})d\boldsymbol{\beta} =\inf_{q(\boldsymbol{\beta}) \in \mathcal Q}\Bigl\{ \mbox{KL}(q(\boldsymbol{\beta})\|\pi(\boldsymbol{\beta}))
+ \int l_n(P_0, P_{\boldsymbol{\beta}})q(\boldsymbol{\beta})(d\boldsymbol{\beta}) \Bigr\}\\
\leq& Cn\varepsilon^2_n,
\end{split}
\] 
where $C$ is some constant.
By the similar argument used in the proof of Theorem \ref{thm:elbo} in the main text, 
\[
\int l_n(P_0, P_{\boldsymbol{\beta}})\widehat q(\boldsymbol{\beta})d\boldsymbol{\beta} \leq \frac{1}{2\sigma_\epsilon^2}\left(\int ||\boldsymbol{X}\boldsymbol{\beta} - \boldsymbol{X}\boldsymbol{\beta^0}||^2_2 \widehat q(\boldsymbol{\beta})(d\boldsymbol{\beta})
+Z
\right) 
\]
where $Z$ is a normal distributed $\mathcal{N}(0, \sigma^2c_0')$, where $c_0'\leq c_0=\int ||\boldsymbol{X}\boldsymbol{\beta} - \boldsymbol{X}\boldsymbol{\beta^0}||^2_2 \widehat q(\boldsymbol{\beta})(d\boldsymbol{\beta})$. Therefore, $-\int l_n(P_0, P_{\boldsymbol{\beta}})\widehat q(\boldsymbol{\beta})d\boldsymbol{\beta} = (1/2\sigma^2)[-c_0+O_p(\sqrt{c_0})]$, and 
$\mbox{KL}(\widehat q(\boldsymbol{\beta})\|\pi(\boldsymbol{\beta}))\leq Cn\varepsilon^2_n+(1/2\sigma^2)[-c_0+O_p(\sqrt{c_0})]=O_p(n\epsilon_n^2)$. 
%As $Cn\varepsilon^2_n\rightarrow\infty$, it follows that w.h.p., 
%\[
%\mbox{KL}(\widehat q(\boldsymbol{\beta})||\pi(\boldsymbol{\beta}))\leq Cn\varepsilon^2_n/2.
%\]

For any $\beta_j \sim \widehat{q}(\beta_j)$, define $\gamma_j = 1(|\beta_j/\sigma| > a_n)$, then
\begin{equation} \label{eq:kl2}
\begin{split}
&\mbox{KL}(\widehat{q}(\boldsymbol{\beta})\|\pi(\boldsymbol{\beta})) \geq \mbox{KL}(\widehat{q}(\boldsymbol{\gamma})\|\pi(\boldsymbol{\gamma})) \\
=&\sum^{p_n}_{j=1}\Bigl[\widehat{q}(\gamma_j=1)\log\frac{\widehat{q}(\gamma_j=1)}{\pi(\gamma_j=1)}+ \widehat{q}(\gamma_j=0)\log\frac{\widehat{q}(\gamma_j=0)}{\pi(\gamma_j=0)}\Bigr].
\end{split}
\end{equation}
Choose $\alpha_0 = p_n^{-1}$ and let $A = \{j:\widehat{q}(\gamma_j=1)\geq \alpha_0\}$, and denote $\alpha = \pi(\gamma_j=1)$. Noting that by the condition of $a_0$ and $b_n$, we can obtain that
\[\begin{split}
%&\alpha = \pi(\gamma_j = 1) \leq \frac{2\sqrt{\nu_0}s_0}{\sigma a_n (\nu_0-1)} \Bigl(1+\frac{(\sigma a_n)^2}{\nu_0s^2_0}\Bigr)^{-\frac{\nu_0-1}{2}} \leq \frac{2\sqrt{\nu_0}s_0}{\sigma a_n (\nu_0-1)}  \preceq \frac{s_0}{a_n} \asymp p^{\frac{2-m}{2}} \prec \alpha_0\\
&\alpha= \pi(\gamma_j = 1) \asymp \alpha_0/(n\vee p_n)^\delta,
\end{split}
\]
thus (\ref{eq:kl2}) implies $\sum_{j\in A}\widehat{q}(\gamma_j=1) \log(\alpha_0/\alpha) \leq C'n\varepsilon^2_n/2$ for some $C'$ and $\sum_{j\in A}\widehat{q}(\gamma_j=1) =O(\widetilde{p})$.

Under $\widehat{q}$, by Markov inequality,
\[
\begin{split}
&Pr(\sum_{j \in A}\gamma_j \geq \widetilde{p}/2) \leq 
Pr(\sum_{j \in A}\gamma_j \geq \widetilde{p}/3+\mathbb{E}\sum_{j \in A}\gamma_j)\leq 9\mbox{Var}(\sum_{j \in A}\gamma_j)/\widetilde p^2\leq 9\mathbb{E}\sum_{j \in A}\gamma_j/\widetilde p^2=o(1).\\
\end{split}
\]

%for $j \in A$, by Bernstein inequality of Binomial distribution tail,
%\[
%\begin{split}
%&Pr(\sum_{j \in A}\gamma_j \geq \widetilde{p}/2) \leq \exp \Bigl(-\frac{\widetilde{p}^2/8}{\sum_{j \in A}\mathbb{E}[\gamma^2_j] + \widetilde{p}/6} \Bigr) = \exp \Bigl(-\frac{\widetilde{p}^2/8}{\sum_{j \in A}\widehat{q}(\gamma_j=1) + \widetilde{p}/6} \Bigr)\\
%\leq &\exp(-c_3\widetilde{p}) \leq \exp(-c_3s)
%\end{split}
%\]

%If applying Chernoff bound, we need $\widetilde{p}/2 \geq \sum_{j\in A}\widehat{q}(\gamma_j=1)$, and 
%\[
%Pr(\sum_{j \in A}\gamma_j \geq \widetilde{p}/2) \leq exp(-\sum_{j\in A}\widehat{q}(\gamma_j=1))\Bigl(\frac{e\sum_{j\in A}\widehat{q}(\gamma_j=1)}{\widetilde{p}/2}\Bigr)^{\widetilde{p}/2}
%\]

On the other hand, under $\widehat{q}$, %for $j \notin A$,
%\[
%Pr(\sum_{j \notin A} \gamma_j \geq \widetilde{p}/2) \leq Pr(Bin(p, \alpha_0) \geq \widetilde{p}/2) \rightarrow Pr(Pois(1) %\geq \widetilde{p}/2) = O(\exp(-c_4s)).
%\]
by Chernoff bound,
\[
\begin{split}
&Pr(\sum_{j \notin A} \gamma_j \geq \widetilde{p}/2) \leq Pr(Bin(p_n, \alpha_0) \geq \widetilde{p}/2)\\
\leq& \exp\Bigl\{-p_n \Bigl(\frac{\widetilde{p}}{2p_n}\log \frac{\widetilde{p}/(2p_n)}{\alpha_0} + \Bigl(1-\frac{\widetilde{p}}{2p_n}\Bigr)\log\frac{1- \widetilde{p}/(2p_n)}{1-\alpha_0} \Bigr)\Bigr\} \leq \exp(-c\widetilde{p})=o(1),
\end{split}
\]
for some constant $c$, since $\widetilde p\rightarrow\infty$.

Combine the above results together, it is trivial to conclude that $\widehat q(B_n^c)=o(1)$.
\end{proof}

{\noindent \bf Proof of Theorem \ref{thm:contraction}}
\begin{proof}
Trivially combine Lemmas \ref{lm:risk1} and \ref{lm:risk2}, we obtain that $\widehat q(\{\|\boldsymbol{\beta} - \boldsymbol{\beta}^0\|_2\geq M_n\varepsilon_n\})=o_p(1)$ for any diverging $M_n$, where $\varepsilon_n=\sqrt{\widetilde p\log(p_n\vee n)/n}$. Due to the arbitrariness of $M_n$ and $\widetilde p$, we can let $M_n\sqrt{\widetilde p/s}\leq M_n'$, and the theorem naturally holds.
\end{proof}

\section{IMPLEMENTATION} \label{sec:implement}
The negative ELBO is
\begin{equation} \label{eq:elbo1}
\begin{split}
\Omega = &-\int \log p(\boldsymbol{Y}|\boldsymbol{\beta}, \lambda)q(\boldsymbol{\beta}|\boldsymbol{\lambda})q(\boldsymbol{\lambda})d\boldsymbol{\beta} d\boldsymbol{\lambda} + \int \mbox{KL}(q(\boldsymbol{\beta}|\boldsymbol{\lambda})\|\pi(\boldsymbol{\beta}|\boldsymbol{\lambda})) q(\boldsymbol{\lambda})d\boldsymbol{\lambda} + \mbox{KL}(q(\boldsymbol{\lambda})\|\pi(\boldsymbol{\lambda}))\\
= &const + \int \Bigl\{-\frac{\boldsymbol{Y}^T\boldsymbol{X}\mathbb{E}[\boldsymbol{\beta}|\boldsymbol{\lambda}]}{\sigma^2} + \frac{\mathbb{E}[\boldsymbol{\beta}^TX^TX\boldsymbol{\beta}|\boldsymbol{\lambda}]}{2\sigma^2}\Bigr\}q(\boldsymbol{\lambda}) d\boldsymbol{\lambda} 
+ \sum^{p_n}_{j=1} \int\Bigl[\log\frac{\lambda_j}{\lambda_j} 
+ \frac{\lambda_j^{-1}+\mu^2_j}{2\lambda_j^{-1}}\Bigr]q(\lambda_j)d\lambda_j\\
& + \sum^{p_n}_{j=1}\Bigl[a_0\log \frac{b_j}{b_n} - \log \frac{\Gamma(a_j)}{\Gamma(a_0)} + (a_j-a_0)\psi(a_j) - (b_j-b_n)\frac{a_j}{b_j}\Bigr]\\
= &const - \frac{\boldsymbol{Y}^T\boldsymbol{X}\boldsymbol{\mu}}{\sigma^2} + \frac{\boldsymbol{\mu}^T\boldsymbol{X}^T\boldsymbol{X}\boldsymbol{\mu}}{2\sigma^2} + \sum^{p_n}_{j=1}\int \Bigl\{\frac{n_j}{2\sigma^2}\lambda_j^{-1}\Bigr\}q(\lambda_j)d\lambda_j + \sum^{p_n}_{j=1} \int\Bigl( \frac{\mu^2_j\lambda_j}{2}\Bigr)q(\lambda_j)d\lambda_j \\
& + \sum^{p_n}_{j=1}\Bigl[a_0\log \frac{b_j}{b_n} - \log \frac{\Gamma(a_j)}{\Gamma(a_0)} + (a_j-a_0)\psi(a_j) - (b_j-b_n)\frac{a_j}{b_j}\Bigr],
\end{split}
\end{equation}
where $\psi(x)$ is the digamma function, and $n_j=[\boldsymbol{X}^T\boldsymbol{X}]_{j,j}$. Therefore,
\[
\begin{split}
\Omega = &const - \frac{\boldsymbol{Y}^T\boldsymbol{X}\boldsymbol{\mu}}{\sigma^2} + \frac{\boldsymbol{\mu}^T\boldsymbol{X}^T\boldsymbol{X}\boldsymbol{\mu}}{2\sigma^2} + \frac{1}{2\sigma^2}\sum^{p_n}_{j=1}\frac{n_jb_j}{a_j-1} + \sum^{p_n}_{j=1}(\mu^2_j/2+b_n)\frac{a_j}{b_j}\\
&+\sum^{p_n}_{j=1}\Bigl[a_0\log \frac{b_j}{b_n} - \log \frac{\Gamma(a_j)}{\Gamma(a_0)} + (a_j-a_0)\psi(a_j) - a_j\Bigr],
\end{split}
\]
and the gradients are
\[
\begin{split}
\frac{d\Omega}{d\boldsymbol{\mu}} &= -\frac{\boldsymbol{X}^T\boldsymbol{Y}}{\sigma^2}+\frac{\boldsymbol{X}^T\boldsymbol{X}\boldsymbol{\mu}}{\sigma^2} +  \boldsymbol{\Lambda} \mu,\\
\frac{d\Omega}{da_j} &= -\frac{n_j}{2\sigma^2}\frac{b_j}{(a_j-1)^2} + \frac{\mu^2_j/2+b_n}{b_j} + (a_j - a_0)\psi_1(a_j) - 1\\
\frac{d\Omega}{db_j} &= \frac{n_j}{2\sigma^2}\frac{1}{a_j-1} - \frac{(\mu^2_j/2+b_n)a_j}{b^2_j} + \frac{a_0}{b_j},
\end{split}
\]
where $\psi_1(x)$ is the trigamma function and $\boldsymbol{\Lambda}=\mbox{diag}(a_1/b_1,\dots,a_{p_n}/b_{p_n})$.
%\[
%\boldsymbol{\Lambda} = 
%\begin{bmatrix}
%\frac{a_1}{b_1} &  &\\
%&\cdots &\\
%& &\frac{a_p}{b_p}  
%\end{bmatrix}.
%\]
Solve the above equations, we have
\[
\begin{split}
    \boldsymbol{\mu} &= (\boldsymbol{X}^T\boldsymbol{X}+\sigma^2\boldsymbol{\Lambda})^{-1}\boldsymbol{X}^T\boldsymbol{Y}, \\
    a_j &= solve(-\frac{n_j}{2\sigma^2}\frac{b_j}{(a_j-1)^2} + \frac{\mu^2_j/2+b_n}{b_j} + (a_j - a_0)\psi_1(a_j) - 1=0),\\
    b_j &= \frac{-a_0+\sqrt{a^2_0+2n_ja_j(\mu^2_j/2 + b_n)/\sigma^2(a_j-1)}}{n_j/\sigma^2(a_j-1)}.
\end{split}
\]

If $\sigma$ is unknown, then the above derivation is modified as:
\[
\begin{split}
\Omega=&const + n\log\sigma + \frac{\boldsymbol{Y}^T\boldsymbol{Y}}{2\sigma^2}- \frac{\boldsymbol{Y}^T\boldsymbol{X}\boldsymbol{\mu}}{\sigma^2} + \frac{\boldsymbol{\mu}^T\boldsymbol{X}^T\boldsymbol{X}\boldsymbol{\mu}}{2\sigma^2} + \frac{1}{2\sigma^2}\sum^{p_n}_{j=1}\frac{n_jb_j}{a_j-1} + \sum^{p_n}_{j=1}(\mu^2_j/2+b_n)\frac{a_j}{b_j}\\
&+\sum^{p_n}_{j=1}\Bigl[a_0\log \frac{b_j}{b_n} - \log \frac{\Gamma(a_j)}{\Gamma(a_0)} + (a_j-a_0)\psi(a_j) - a_j\Bigr],
\end{split}
\]
and the additional partial derivative w.r.t. $\sigma$ is
\[
\begin{split}
\frac{d\Omega}{d\sigma}= \frac{n}{\sigma} - \frac{(\boldsymbol{Y}-\boldsymbol{X}\boldsymbol{\mu})^T(\boldsymbol{Y}-\boldsymbol{X}\boldsymbol{\mu})}{\sigma^3}-\frac{1}{\sigma^3}\sum^{p_n}_{j=1}\frac{n_jb_j}{a_j-1}.
\end{split}
\]
Thus, the updates of $\mu_j$, $a_j$ and $b_j$ keep the same, and the update of $\sigma$ follows
\[
\sigma = \sqrt{\frac{(\boldsymbol{Y}-\boldsymbol{X}\boldsymbol{\mu})^T(\boldsymbol{Y}-\boldsymbol{X}\boldsymbol{\mu})+\sum^{p_n}_{j=1}\frac{n_jb_j}{a_j-1}}{n}}.
\]

\section{COMPARISON OF MINIMIZING JOINT KL AND MARGINAL KL}

Our presented theory investigates the asymptotics of the variational Bayes distribution that minimizes the marginal KL divergence of $\boldsymbol{\beta}$. In such case, the negative ELBO is
\begin{equation} \label{eq:elbo2}
\widetilde \Omega = -\int \log p(\boldsymbol{Y}|\boldsymbol{\beta})q(\boldsymbol{\beta})d\boldsymbol{\beta} + \mbox{KL}(q(\boldsymbol{\beta})\|\pi(\boldsymbol{\beta})),
\end{equation}
where for $j=1, \ldots, p_n$,
\[
\pi(\beta_j) = \frac{1}{\sqrt{\nu_0}s_0}\Bigl(1 + \nu_0^{-1}\Bigl(\frac{\beta_j}{s_0}\Bigr)^2 \Bigr)^{-\frac{\nu_0+1}{2}},
\mbox{ and } q(\beta_j) = \frac{1}{\sqrt{\nu}s}\Bigl(1 + (\nu)^{-1}\Bigl(\frac{\beta_j -\widetilde{\mu}_j}{s_j}\Bigr)^2 \Bigr)^{-\frac{\nu+1}{2}},
\]
with $s_0 = \sqrt{b_n/a_0}$, $\nu_0 = 2a_0$, $s=\sqrt{b_j/a_j}$ and $\nu = 2a_j$. However $\mbox{KL}(q(\boldsymbol{\beta})\|\pi(\boldsymbol{\beta}))$ has no analytical expression, and the optimization of (\ref{eq:elbo2}) will then require Monte Carlo estimation and gradient descent type algorithms.

Therefore, for the simplicity of the computation, the ELBO optimization algorithm described in Section \ref{sec:implement} (as well as implemented in the simulation studies of main text) targets to minimize the joint KL divergence of $(\boldsymbol{\beta}, \boldsymbol{\lambda})$ rather than the marginal KL divergence of $\boldsymbol{\beta}$. In other words, there is a gap between our computational algorithm and our theory.

To justify that our implemented procedure (i.e., minimizing the joint KL divergence of $(\boldsymbol{\beta}, \boldsymbol{\lambda})$) is a close approximation of the variational procedure studied by our theory (i.e., minimizing the marginal KL divergence of $(\boldsymbol{\beta})$), We compare the two procedures via a toy example. Specifically, we would like to compare variational posterior means $\boldsymbol{\mu}$ (by minimizing (\ref{eq:elbo1})) and $\widetilde{\boldsymbol{\mu}}$ (by minimizing (\ref{eq:elbo2})). Consider a linear model with $n=100$, $p_n=100$ and $\boldsymbol{\beta}^0=(10,10,10,10,10,0,\ldots, 0)^T$. Suppose $\sigma$ is known and equals 1. For both two procedures, we choose $a_0=2$ and $b_n/a_0=\log(p_n)/[np_n^{2+1/a_0}p_n^{6/a_0}]$. We use Lasso estimator for both the initial value of $\boldsymbol{\mu}$ and $\widetilde{\boldsymbol{\mu}}$, and Adam \citep{kingma2015adam} is used for minimizing (\ref{eq:elbo2}) with the learning rate being 0.001.

We run the experiment for 100 times, and the means and standard deviations of the mean squared error (MSE) $(\sum^K_{k=1}(\mu_k - \widetilde{\mu}_k)^2/K)$ for both the nonzero entries $\boldsymbol{\beta}_{\xi^0}$ and zero entries $\boldsymbol{\beta}_{(\xi^0)^c}$ are reported: For $\boldsymbol{\beta}_{\xi^0}$, the MSE is $0.0108 \pm 0.0038$; For $\boldsymbol{\beta}_{(\xi^0)^c}$, the MSE is $0.0006 \pm 0.0008$.

This toy example shows there is little estimation difference in minimizing (\ref{eq:elbo1}) or (\ref{eq:elbo2}), and thus in practice the Algorithm 1 in the main text is preferred due to its simple form of coordinate descent update.

\end{document}